\newtheorem{theorem}{\bf Theorem}[section]
\newtheorem{lemma}[theorem]{\bf Lemma}
\newtheorem{proposition}[theorem]{\bf Proposition}
\newenvironment{proof}{\noindent{\em Proof:}}{\quad \hfill$\Box$\vspace{2ex}}
\newtheorem{definition}[theorem]{\bf Definition}
\def \bI {\Bbb I}
\def \bN {\Bbb N}
\def \bR {\Bbb R}
\def \bW {{\bf W}}
\def \barW {\bar{\bf W}}
\def \barI {\bar{\bf I}}
\def \barb {\bar{\bf b}}
\def \bbb {{\bf b}}
\def \bP {{\bf P}}
\def \bx {{\bf x}}
\def \ba {{\bf a}}
\def \bc {{\bf c}}
\def \bd {{\bf d}}
\def \bu {{\bf u}}
\def \cD {{\cal D}}
\def \cF {{\cal F}}
\def \cI {{\cal I}}
\def \cN {{\cal N}}
\def \and {\, \mbox{\rm and}\, }
\def \supp {\,{\rm supp}\,}
\def \relu {\,{\rm ReLU}\,}
\def \diag {\,{\rm diag}\,}
\def \ess {{\rm ess}}
\newcommand{\Rmnum}[1]{\expandafter\@slowromancap\romannumeral #1@}
\begin{document}
\title{\bf Convergence of Deep ReLU Networks}
\author{Yuesheng Xu\thanks{Department of Mathematics \& Statistics, Old Dominion University, Norfolk, VA 23529, USA. E-mail address: {\it y1xu@odu.edu}. Supported in part by US National Science Foundation under grant DMS-1912958 and by Natural Science Foundation of China under grant 11771464. }
\quad and \quad Haizhang Zhang\thanks{School of Mathematics (Zhuhai), Sun Yat-sen University, Zhuhai, P.R. China. E-mail
address: {\it zhhaizh2@sysu.edu.cn}. Supported in part by National Natural Science Foundation of China under grant 11971490, and by Natural Science Foundation of Guangdong Province under grant 2018A030313841. Corresponding author.}}
\date{}
\maketitle

%
%

\begin{abstract}
We explore convergence of deep neural networks with the popular ReLU activation function, as the depth of the networks tends to infinity. To this end, we introduce the notion of activation domains and activation matrices of a ReLU network. By replacing applications of the ReLU activation function by multiplications with activation matrices on activation domains, we obtain an explicit expression of the ReLU network. We then identify the convergence of the ReLU networks as convergence of a class of infinite products of matrices. Sufficient and necessary conditions for convergence of these infinite products of matrices are studied. As a result, we establish necessary conditions for ReLU networks to converge that the sequence of weight matrices converges to the identity matrix and the sequence of the bias vectors converges to zero as the depth of ReLU networks increases to infinity. Moreover, we obtain sufficient conditions in terms of the weight matrices and bias vectors at hidden layers for pointwise convergence of deep ReLU networks.  These results provide mathematical insights to the design strategy of the well-known deep residual networks in image classification.

\medskip

\noindent{\bf Keywords:} deep learning, ReLU networks, activation domains, infinite product of matrices

\end{abstract}

\section{Introduction}

Deep neural networks (DNNs) have achieved great successes for a wide range of machine learning problems including face recognition, speech recognition, game intelligence, natural language processing, and autonomous navigation. It is generally agreed that four ingredients contribute to the successes. The first two of them are the availability of vast amounts of training data, and recent dramatic improvements in computing and storage power. The third one is a class of efficient numerical algorithms such as the back-propagation training algorithm via the Stochastic Gradient Decent (SGD), Adaptive Boosting  (AdaBoost) algorithms, and the Expectation-Maximization algorithm (EM). The fourth ingredient, also the most important one, is powerful neural network architectures, such as Convolutional Neural Networks (CNN), Long-Short Time Memory (LSTM) networks, Recurrent Neural Networks (RNN), Generative Adversarial Networks (GAN), Deep Belief Networks (DBN), and Residual Networks (ResNet), which provide a superior way of representing data. We refer to a survey paper \cite{LeCun} and monograph \cite{Goodfellow} for an in-depth overview of deep learning.

Compared to the vast development in engineering and applications, research on the mathematical theory of DNNs is still at its infancy, and yet is undergoing rapid progress. Many interesting papers on the approximation and expressive powers of DNNs have appeared in the past several years. We provide here a brief review. More details can be found in two recent surveys \cite{Devore1,Elbrachter}.
Poggio, Mhaskar, Rosasco, Miranda, and Liao \cite{Poggio} proved that DNNs approximate a class of functions with special compositional structure exponentially better than shallow networks. Montanelli and Du \cite{Montanelli1} and Yarotsky \cite{Yarotsky} estimated the number of parameters needed for DNNs to achieve a certain error tolerance in approximating functions in the Koborov space space and differential functions, respectively. Montanelli and Yang \cite{Montanelli2} achieved error bounds for deep ReLU networks approximation of multivariate functions using the Kolmogorov-Arnold superposition theorem. These three pieces of work indicated that DNNs are able to lessen the curse of dimensionality. E and Wang proved that for analytic functions in a low dimension, the convergence rate of the DNN approximation is exponential. Zhou \cite{Zhou1} established the universality of deep convolutional neural networks. Shen, Yang, and Zhang put forward in a series of works \cite{Shen1,Shen2,Shen3} optimal approximation rates for ReLU networks in terms of width and depth to approximate an arbitrary continuous or H\"{o}lder continuous function. Daubechies, DeVore, Foucart,  Hanin, and Petrova \cite{Daubechies} showed that DNNs possess greater approximation power than traditional methods of nonlinear approximation such as variable knot splines and $n$-term approximation from dictionaries. Wang \cite{Wang} presented a mathematical introduction to generative adversarial nets. Lipschitz and proximal properties of neural networks were investigated in \cite{Combettes,Hasannasab,Scaman,Zou}.

In this paper, we study convergence of deep ReLU networks from a different perspective. We are interested in knowing whether or not a deep ReLU network with a fixed width and an increasing depth  will converge to a meaningful function (as a function of the input variable), as its depth tends to infinity. It is well-known that in linear approximations (for example, Fourier analysis \cite{Stein} and wavelet analysis \cite{I.Daubechies}), issues regarding convergence of an expansion such as Fourier expansion and wavelet expansion are fundamental. In particular, in classical analysis, convergence of Fourier expansions with given coefficients is a basic issue. As deep neural networks are used more and more in approximation as a function class, convergence of a sequence of neural networks approximating to a function has become a pressing and interesting issue.  Along this line, the first question is: What requirements should we impose to the weight matrices and the bias vectors to guarantee that the related ReLU deep neural network will converge to a meaningful function as its number of layers tends to infinity? This paper attempts to answer this question. The neural networks to be considered here are not tied with a specific target function. Convergence of neural networks that result from approximation of a given function will be investigated in a different occasion.

It has long been understood that a neural network with the ReLU activation function results in a piecewise linear function. The first novelty of this work is to identify a subdomain that corresponds to a linear component of the ReLU network as an {\it activation domain} and to an {\it activation matrix} which is a diagonal matrix whose diagonal entries are either 1 or 0. The identification allows us to replace the applications of the ReLU activation function, often a source of technical difficulties, by multiplications with the activation matrices. Making use of this observation, we put forward a useful representation for deep ReLU networks, by which we formulate the convergence of deep ReLU networks as convergence of a class of infinite products of matrices. Necessary and sufficient conditions for convergence of such infinite products of matrices are then established. Based on this understanding, we provide necessary conditions and rather week sufficient conditions for a deep ReLU network to converge. The necessary conditions provide mathematical guidelines for further development of deep ReLU networks. Moreover, the sufficient conditions enable us to interpret the design strategy of the well-known deep residual networks, which have been widely used in image classification, with an insightful mathematical explanation.

{}{We would like to point out the differences of our mathematical study on convergence of neural networks to those studies in the computer science community on convergence of Conjugate Kernels and Neural Tangent Kernels with infinite-depth or infinite-width networks (see, for example, \cite{Cho2009, Daniely2016, Hanin,Hu2021,Huang2020Dynamics, Huang2021OnTN,Jacot,Nguyen,Poole2016, Schoenholz2016,Williams1996}). Conjugate kernels and neural tangent kernels are introduced to help understand the training process of DNNs
and mechanism of generalization. The key idea is to view these kernels as random variables on random initialization of DNNs and to study their limiting probability distributions as the width or depth of the DNN increases to infinity. This is done by fixing all parameters in the DNNs except those in the last layer, and then consider the output vector at the last layer as a random feature vector. The conjugate kernel and neural tangent kernel are then defined as inner product of this output vector and its derivatives, respectively. Our investigation and those on conjugate kernels and neural tangent kernels via random initializations of DNNs belong to different convergence problems of DNNs. We shall study the convergence of the function determined by a DNN as its depth tends to infinity, while those studies on conjugate kernels and neural tangent kernels focus on kernels determined by the last layer only. Moreover, our concerns are on conditions on all parameters (not just those on the last layer) of the network to ensure such a convergence. Finally, our study is on the convergence of a DNN to a deterministic function by analysis methods, while those on conjugate kernels and neural tangent kernels are on convergence of kernels in the probability sense.}

The rest of this paper is organized as follows. In Section 2, we review the definition and notation of neural networks and define the notion of convergence of neural networks when new layers are paved to the existing network so that the depth is increasing to infinity. In Section 3, we introduce the notions of the activation domain and activation matrix, with which we present an explicit expression for deep ReLU networks. Based on this expression, we connect convergence of deep ReLU networks with the existence of two limits involving infinite products of matrices. Conditions for convergence of such infinite products of matrices are examined in Section 4. Finally, in Section 5 we revisit the convergence of deep ReLU networks by presenting sufficient conditions for the pointwise convergence of the deep ReLU networks. Moreover, as an application of the result established, we provide a mathematical interpretation to the design of the successful deep residual networks.

\section{Deep Neural Networks and Convergence}
\setcounter{equation}{0}
In this section, we recall the definition of the deep neural network and formulate its convergence problem to be studied in this paper.

We consider general fully connected feed-forward neural networks with fixed width $m$ and increasing depth $n$, for $m, n\in\bN$, from input domain $[0,1]^d\subseteq\bR^d$ to the output space $\bR^{d'}$. For each $i$ with $1\le i\le n$, let $\bW_i$ and $\bbb_i$ denote respectively the weight matrix and bias vector of the $i$-th hidden layer. That is, $\bbb_i\in\bR^m$ for $1\le i\le n$, $\bW_1\in \bR^{m\times d}$, and $\bW_i\in\bR^{m\times m}$ for $2\le i\le n$. The weight matrix $\bW_{o}$ and bias vector $\bbb_{o}$ of the output layer satisfy $\bW_{o}\in \bR^{d'\times m}$ and $\bbb_{o}\in\bR^{d'}$. The structure of such a deep neural network is determined after the choice of an activation function.


Widely-used activation functions in neural networks include the {\bf ReLU} function
$$
\relu(x):=\max(x,0),\ \ x\in\bR
$$
and the {logistic sigmoid} function
$$
S(x):=\frac{1}{1+e^{-x}},\ \ x\in\bR.
$$
After an activation function $\sigma$ is chosen, the structure of the resulting deep neural network may be illustrated as follows:
\begin{equation}\label{neuralnetworks}
\begin{aligned}
 x\in[0,1]^d\ &  \xrightarrow[\sigma]{\bW_1,\bbb_1} \  x^{(1)} &\xrightarrow[\sigma]{\bW_2,\bbb_2} \  x^{(2)} &\rightarrow\cdots\rightarrow&\xrightarrow[\sigma]{\bW_n,\bbb_n}  x^{(n)}&\xrightarrow{\bW_{o},\bbb_{o}}\  y\in\bR^{d'}. \\
\mbox{input}\quad& \quad\mbox{1st layer}&\mbox{ 2nd layer} & & \mbox{$n$-th layer}&\quad\mbox{ output}
\end{aligned}
\end{equation}
Here
\begin{equation}\label{neuralnetworks-termk}
x^{(k)}:=\sigma(\bW_{k}x^{(k-1)}+\bbb_k),\ \ 1\le k\le n \ \mbox{ with } \ x^{(0)}=x,
\end{equation}
\begin{equation}\label{neuralnetworks-output}
y:=\bW_{o}x^{(n)}+b_{o},
\end{equation}
and the activation function $\sigma$ is applied to a vector componentwise. Thus, the above deep neural network determines a continuous function $x\to y$ from $[0,1]^d$ to $\bR^{d'}$.

Consecutive compositions of functions are typical operations used in deep neural networks.
To have a compact form, below we define the notation for consecutive compositions of functions.

\begin{definition} {\bf (Consecutive composition)} \label{Consecutive-compostion}
Let $f_1, f_2, \dots,f_n$ be a finite sequence of functions such that the range of $f_i$ is contained in the domain of $f_{i+1}$, $1\le i\le n-1$, the consecutive composition of $\{f_i\}_{i=1}^n$ is defined to be function
$$
\bigodot_{i=1}^n f_i:=f_n\circ f_{n-1}\circ\cdots\circ f_2\circ f_1,
$$
whose domain is that of $f_1$.
\end{definition}

Note that whenever the consecutive composition notation is used, the order of compositions given in Definition \ref{Consecutive-compostion} is always assumed.

Using the notation defined in Definition \ref{Consecutive-compostion} for consecutive compositions of functions, equations \eqref{neuralnetworks-termk} and \eqref{neuralnetworks-output} may be rewritten as
$$
x^{(k)}=\left(\bigodot_{i=1}^k \sigma(\bW_i \cdot+\bbb_i)\right)(x),\ \ 1\le k\le n
$$
and
$$
y=\bW_{o}\left(\bigodot_{i=1}^n \sigma(\bW_i \cdot+\bbb_i)\right)(x)+b_{o},\ \ x\in[0,1]^d,
$$
respectively.
We are concerned with convergence of the above functions determined by the deep neural network as $n$ increases to infinity. One sees that the output layer is a linear function of $x^{(n)}$ and thus, it does not affect the convergence. By this observation, we introduce the following definition.

\begin{definition} {\bf (Convergence of neural networks)} Let $\bW:=\{\bW_n\}_{n=1}^\infty$ with $\bW_1\in\bR^{m\times d}$, $\bW_n\in\bR^{m\times m}$, $n\ge 2$ be a sequence of weight matrices, and $\bbb:=\{\bbb_n\}_{n=1}^\infty$ with $\bbb_n\in\bR^m$ be a sequence of bias vectors. Define the deep neural network by
$$
\cN_n(x):=\left(\bigodot_{i=1}^n \sigma(\bW_i \cdot+\bbb_i)\right)(x),\ \ x\in[0,1]^d.
$$
We say the deep neural network $\cN_n$ determined by $\bW$, $\bbb$ and a chosen activation function $\sigma$ converges with respect to some norm $\|\cdot\|$ to a limit function $\cN$ if
$$
\lim_{n\to\infty} \|\cN_n-\cN\|=0.
$$
\end{definition}

The goal of this paper is to understand what conditions are required for the weight matrices $\bW_n$ and the bias vectors $\bbb_n$ to ensure convergence of the deep neural network when the activation function is chosen to be ReLU.

\section{Convergence of ReLU Networks}
\setcounter{equation}{0}

In this section, we consider convergence of a deep ReLU network $\cN_n$ as the number $n$ of layers goes to infinity. For this purpose, we introduce an algebraic formulation of a deep ReLU network convenient for convergence analysis.

It has been understood \cite{Devore1} that the neural network (\ref{neuralnetworks}) with $\sigma$ being the ReLU activation function determines a function
$$
f_n(x)=W_o\cN_n(x)+b_o,\ \ x\in[0,1]^d,
$$
that is piecewise linear. Our novelty is to identify the linear components of $\cN_n$ and their associated subdomains by using a sequence of activation matrices.

We begin with analyzing a one layer ReLU network $\cN_1$, which has the form
$$
\cN_1(x):=\sigma (\bW_1x+\bbb_1), \ \ x\in [0,1]^d,
$$
where $\sigma$ is the ReLU activation function. Note that the $m$ components of $\bW_1x+\bbb_1$ are linear functions $\ell_j(x)$, $x\in [0,1]^d$ for $j=1,2, \dots, m$. Hence,
\begin{equation}\label{reluononelayer}
    \cN_1(x)=[\sigma(\ell_j(x)): j=1,2,\dots, m]^T.
\end{equation}
According to the definition of the ReLU activation function, we observe for $j=1,2,\dots, m$ that
\begin{equation}\label{rangeofsigmafunction}
\sigma(\ell_j(x))=0, \ \ \mbox{if}\ \ \ell_j(x)\leq 0, \ \ \mbox{and}\ \
\sigma(\ell_j(x))=\ell_j(x), \ \ \mbox{if}\ \ \ell_j(x)> 0.
\end{equation}
When $\sigma(\ell_j(x))=0$, we say that the node with $\ell_j(x)$ is {\bf deactivated}, and otherwise, it is {\bf activated}. Apparently, there are at most $2^m$ different activation patterns at the first layer. To describe these patterns, we introduce a set of $m\times m$ diagonal matrices whose diagonal entries are either 1 or 0.

Specifically, we define the set of {\bf activation matrices} by
$$
\cD_m:=\left\{\diag(a_1,a_2,\dots,a_m):a_i\in\{0,1\},1\le i\le m\right\}.
$$
An element of $\cD_m$ is either the identity matrix or its degenerated matrix (some diagonal entries degenerated to zero).
The support of an activation matrix $J\in \cD_m$ is defined by
$$
\supp J:=\{k:\ J_{kk}=1,\ \ 1\le k\le m\}.
$$
Clearly, an activation matrix $J\in \cD_m$ is uniquely determined by its support. The set $\cD_m$ of the activation matrices has exactly $2^m$ elements since each of the $m$ diagonal entries of an element in the set has exactly two different choices. This matches the number of possible different activation patterns of a ReLU neural network:
Each element of the set $\cD_m$ corresponds to an activation pattern. For this reason, it is convenient to use $\cD_m$ as an index set.

\begin{definition} {\bf (Activation domains of one layer network)} \label{ActDomofOne}
For a weight matrix $\bW$ with $m$ rows and a bias vector $\bbb\in\bR^m$, the activation domain of $\sigma(\bW x+\bbb)$ with respect to a diagonal matrix $J\in\cD_m$ is
$$
D_{J,\bW,\bbb}:=\left\{x\in\bR^{m'}: (\bW x+\bbb)_j>0\ \mbox{ for }j\in\supp J\mbox{ and }(\bW x+\bbb)_j\le0\mbox{ for }j\notin\supp J\right\}.
$$
\end{definition}

Note that the integer $m'$ in Definition \ref{ActDomofOne} may be chosen to be $d$ (when it is used to define activation domains of the first layer) or $m$ (when it is used to define activation domains of layers which are not the first layer).

In Definition \ref{ActDomofOne}, we use an activation matrix $J\in\cD_m$ to associate an activation pattern of the $m$ components of $\bW x+\bbb$. As a result, Definition \ref{ActDomofOne} enables us to construct a partition of the unit cube $[0,1]^d$ that corresponds to the piecewise linear nature of the function $\cN_1$ and allows us to reexpress $\cN_1$ in a piecewise linear manner. Specifically, we have that
\begin{equation}\label{partition}
[0,1]^d=\bigcup_{I_1\in\cD_m}(D_{I_1,\bW_1, \bbb_1}\cap [0,1]^d).
\end{equation}
By equations (\ref{reluononelayer}) and \eqref{partition}, the one layer ReLU network can be reexpressed as
\begin{equation}\label{expressionofN1}
\cN_1(x)=I_1(\bW_1x+\bbb_1), \ \ x\in D_{I_1,\bW_1, \bbb_1}, \ \ \mbox{for}\ \ I_1\in \cD_m.
\end{equation}
Clearly, on each activation domain $D_{I_1,\bW_1, \bbb_1}$, $\cN_1$ is a linear function.
The essence of equation \eqref{expressionofN1} is that we are able to replace the application of the ReLU activation function by multiplication with an activation matrix in $\cD_m$. This will lead to great convenience in processing ReLU networks. We remark that some of the $2^m$ activation domains might be empty. In fact, by \cite{Zaslavsky}, the number of activation domains with nonempty interior does not exceed
$$
\sum_{k=0}^d \binom{m}{k}.
$$

For a deep ReLU neural network with $n$ layers, we need a sequence of $n$ activation matrices $\barI_n:=(I_1,I_2,\dots,I_n)\in (\cD_m)^{n}$ to identify its different activation patterns on the $n$ hidden layers, where $I_k$ marks the activation pattern at the $k$-th layer. We next define activation domains of a multi-layer network.

\begin{definition}\label{multiactivationdomain} {\bf (Activation domains of a multi-layer network)}
For $\barW_n:=(\bW_1,\dots,\bW_n)\in\bR^{m\times d}\times(\bR^{m\times m})^{n-1},\ \barb_n:=(\bbb_1,\dots,\bbb_n)\in(\bR^m)^n$, the activation domain of
$$
\bigodot_{i=1}^n\sigma(\bW_i\cdot+\bbb_i)
$$
with respect to $\barI_n:=(I_1,\dots,I_n)\in (\cD_m)^{n}$ is defined recursively by
$$
D_{\barI_1,\barW_1,\barb_1}= D_{I_1,\bW_1,\bbb_1}\cap[0,1]^d
$$
and
$$
D_{\barI_n,\barW_n,\barb_n}=\left\{x\in D_{\barI_{n-1},\barW_{n-1},\barb_{n-1}}:\ \biggl(\bigodot_{i=1}^{n-1}\sigma(\bW_i\cdot+\bbb_i)\biggr)(x)\in D_{I_n,\bW_n,\bbb_n}\right\}.
$$
\end{definition}

We have the following observation regarding the activation domain.

\begin{proposition} The sequence of the activation domains $D_{\barI_n,\barW_n,\barb_n}$ are nested, that is,
$$
D_{\barI_{n+1},\barW_{n+1},\barb_{n+1}}\subseteq D_{\barI_n,\barW_n,\barb_n},\ \ n\in\bN.
$$
Moreover, for each $n\in \bN$,
\begin{equation}\label{equivalentactivation}
D_{\barI_n,\barW_n,\barb_n}=\left\{x\in D_{I_1,\bW_1,\bbb_1}\cap[0,1]^d:\ \biggl(\bigodot_{i=1}^{k-1}\sigma(\bW_i\cdot+\bbb_i)\biggr)(x)\in D_{I_k,\bW_k,\bbb_k},\ 2\le k\le n\right\}.
\end{equation}
\end{proposition}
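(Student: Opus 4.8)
The plan is to prove both assertions directly from the recursive structure in Definition \ref{multiactivationdomain}, without any additional machinery. For the nesting I would observe that the inclusion is essentially built into the definition itself. By the recursion, replacing $n$ by $n+1$, the set $D_{\barI_{n+1},\barW_{n+1},\barb_{n+1}}$ is defined as the collection of points $x \in D_{\barI_n,\barW_n,\barb_n}$ subject to the extra membership requirement $(\bigodot_{i=1}^{n}\sigma(\bW_i\cdot+\bbb_i))(x) \in D_{I_{n+1},\bW_{n+1},\bbb_{n+1}}$. Since this describes a subset of $D_{\barI_n,\barW_n,\barb_n}$ carved out by imposing one further constraint, the inclusion $D_{\barI_{n+1},\barW_{n+1},\barb_{n+1}} \subseteq D_{\barI_n,\barW_n,\barb_n}$ is immediate, establishing that the sequence is nested.

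For the equivalent characterization \eqref{equivalentactivation}, I would proceed by induction on $n$. The base case $n=1$ holds trivially: the right-hand side reduces to $D_{I_1,\bW_1,\bbb_1}\cap[0,1]^d$, since the family of range conditions indexed by $2\le k\le n$ is vacuous, and this is exactly $D_{\barI_1,\barW_1,\barb_1}$ by definition. For the inductive step, assuming the formula holds at stage $n-1$, I would substitute the inductive expression for $D_{\barI_{n-1},\barW_{n-1},\barb_{n-1}}$ into the recursive definition of $D_{\barI_n,\barW_n,\barb_n}$. The recursion appends precisely the single condition $(\bigodot_{i=1}^{n-1}\sigma(\bW_i\cdot+\bbb_i))(x) \in D_{I_n,\bW_n,\bbb_n}$ to the constraints already present at stage $n-1$, and this is exactly the $k=n$ term of the conjunction in \eqref{equivalentactivation}. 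Hence the membership requirement $x\in D_{I_1,\bW_1,\bbb_1}\cap[0,1]^d$ together with the range conditions for $2\le k\le n$ is reproduced, completing the induction.

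The argument is almost entirely a matter of unfolding the recursion, so I do not expect a genuine obstacle. The only point requiring a small amount of care is the alignment of indices in the consecutive composition: one must verify that the membership condition inherited from the recursion at layer $n$, which involves $\bigodot_{i=1}^{n-1}\sigma(\bW_i\cdot+\bbb_i)$, matches the $k=n$ term of \eqref{equivalentactivation}, and that the nested constraints accumulated through stage $n-1$ carry over unchanged. Once this index bookkeeping is checked, both the nesting and the characterization follow directly.
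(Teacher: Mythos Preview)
Your proposal is correct and follows exactly the approach sketched in the paper: the nesting is read off directly from the recursive definition, and the explicit characterization \eqref{equivalentactivation} is obtained by a straightforward induction on $n$. Your write-up simply supplies the details the paper leaves implicit, and the index bookkeeping you flag is indeed the only point needing care.
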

\begin{proof}
That $D_{\barI_n,\barW_n,\barb_n}$ are nested follows directly from the definition. Equality (\ref{equivalentactivation}) may be proved by induction on $n$.
\end{proof}

The activation domain $D_{\barI_n,\barW_n,\barb_n}$ characterizes the inputs $x\in[0,1]^d$ such that when those inputs are going through the ReLU neural network (\ref{neuralnetworks}), at the $k$-th hidden layer ($1\le k\le n$), exactly the nodes with index in $\supp I_k$ are activated. There are at most $2^{nm}$ activation domains $D_{\barI_n,\barW_n,\barb_n}$ corresponding to all choices of sequences of diagonal matrices $\barI_n\in(\cD_m)^{n}$, and a large number of them might be empty or have zero Lebesgue measure. {We remark that it is well-known that a ReLU network outputs a function that is piece-wise linear. The number of linear regions of a ReLU network is estimated in \cite{Montufar,Serra}. Our novelty here is to write down an explicit expression of the ReLU network using the activation domains.}

For each positive integer $n$, the activation domains
$$
D_{\barI_n,\barW_n,\barb_n},\ \  \mbox{for}\ \ \barI_n:=(I_1,\dots,I_n)\in (\cD_m)^{n},
$$
form a partition of the unit cube $[0,1]^d$. That is,
for each $n\in \bN$,
$$
[0,1]^d=\bigcup_{\barI_n\in (\cD_m)^{n}}D_{\barI_n,\barW_n,\barb_n}
$$
By using these activation domains, we are able to write down an explicit expression of the ReLU network $\cN_n$ with applications of the ReLU activation function replaced by multiplications with the activation matrices.

We now establish a representation of the ReLU network based on its activation domains and activation matrices. To this end,
we need a notation to denote the product of matrices with a prescribed order. Specifically, we write
$$
\prod_{i=1}^n\bW_i=\bW_n\bW_{n-1}\cdots\bW_1.
$$
For $n,k\in \bN$, we also adopt the following convention that
$$
\prod_{i=k}^n\bW_i=\bW_n\bW_{n-1}\cdots\bW_k, \ \ \mbox{for} \ \  n\geq k, \ \ \mbox{and}\ \
\prod_{i=k}^n\bW_i={ I}, \ \ \mbox{for} \ \  n< k,
$$
where ${ I}$ denotes the $m\times m$ identity matrix.

\begin{theorem}\label{expressionofcnprop}
It holds that
\begin{equation}\label{compositions2}
\cN_n(x)=\left(\prod_{i=1}^nI_i\bW_i\right)x+\sum_{i=1}^n\left(\prod_{j=i+1}^n I_j\bW_j\right)I_i\bbb_i,\ \ x\in D_{\barI_n,\barW_n,\barb_n},  \ \ \barI_n:=(I_1,\dots,I_n)\in (\cD_m)^{n}.
\end{equation}
\end{theorem}
\begin{proof}
We prove by induction on $n$. When $n=1$, by (\ref{expressionofN1}), the result is true. Suppose that (\ref{compositions2}) holds for $n-1$. Now let $x\in D_{\barI_n,\barW_n,\barb_n}$. Then
$$
\cN_n(x)=\sigma\biggl(\bW_n \biggl(\bigodot_{i=1}^{n-1} \sigma(\bW_i\cdot+\bbb_i)\biggr)(x) +\bbb_n\biggr).
$$
By Definition \ref{multiactivationdomain},
$$
\biggl(\bigodot_{i=1}^{n-1} \sigma(\bW_i\cdot+\bbb_i)\biggr)(x)\in D_{I_n,\bW_n,\bbb_n}.
$$
We then get by (\ref{expressionofN1}) and induction that
$$
\begin{aligned}
\cN_n(x)&=I_n\left(\bW_n \left(\bigodot_{i=1}^{n-1} \sigma\left(\bW_i\cdot+\bbb_i\right)\right)(x) +\bbb_n\right)\\
&=I_n\bW_n\left(\left(\prod_{i=1}^{n-1}I_i\bW_i\right)x+\sum_{i=1}^{n-1}\left(\prod_{j=i+1}^{n-1} I_j\bW_j\right)I_i\bbb_i\right)+I_n\bbb_n\\
&=\left(\prod_{i=1}^nI_i\bW_i\right)x+\sum_{i=1}^n\left(\prod_{j=i+1}^n I_j\bW_j\right)I_i\bbb_i,
\end{aligned}
$$
which proves (\ref{compositions2}).
\end{proof}

The representation of a deep ReLU network established in Theorem \ref{expressionofcnprop} is crucial for further investigation of the network. The piecewise linear property of a ReLU network follows immediately from this representation. It is also helpful for developing the convergence results of ReLU Networks later in this paper.

In the remaining part of this section, we formulate the convergence of deep ReLU networks as a problem about convergence of infinite products of matrices.
Denote by $\|\cdot\|_p$ the $\ell^p$-norm on $\bR^m$, $1\le p\le+\infty$. For a Lebesgue measurable subset $\Omega\subseteq\bR^d$, by $L^p(\Omega,\bR^m)$ we denote the space of all real-valued functions $f:\Omega\to\bR^m$ such that each component of $f$ is Lebesgue measurable on $\Omega$ and such that
$$
\|f\|_p:=\left\{
\begin{array}{ll}
\displaystyle{\biggl(\int_{\Omega}\|f(x)\|_p^pdx\biggr)^p},&1\le p<+\infty,\\
\displaystyle{\ess\sup_{x\in\Omega} \|f(x)\|_\infty},&p=+\infty
\end{array}
\right.
$$
is finite. Also, $C(\Omega,\bR^m)$ is the space of all continuous functions from $\Omega$ to $\bR^m$.

Theorem \ref{expressionofcnprop} allows us to present a necessary and sufficient condition for ReLU neural networks $\cN_n$ to converge to a function in $L^p([0,1]^d,\bR^m)$, as $n\to\infty$.
Let $\bW:=\{\bW_n\}_{n=1}^\infty$ with $\bW_1\in \bR^{m\times d}$, $\bW_n\in \bR^{m\times m}$, $n>1$ and $\bbb:=\{\bbb_n\}_{n=1}^\infty$ with $\bbb_n\in\bR^m$ be a sequence of weight matrices and bias vectors, respectively. Suppose $\cN\in C([0,1]^d,\bR^m)$.
It follows from Theorem \ref{expressionofcnprop} that the ReLU neural networks $\cN_n$ converge to $\cN$ in $L^p([0,1]^d,\bR^m)$ if and only if
\begin{equation}\label{convergenceidentity}
    \lim_{n\to\infty}\sum_{\barI_n\in (\cD_m)^{n}}\int_{D_{\barI_n,\barW_n,\barb_n}}\left\|
    \left(\prod_{i=1}^nI_i\bW_i\right)x+\sum_{i=1}^n\left(\prod_{j=i+1}^n I_j\bW_j\right)I_i\bbb_i-\cN(x)
    \right\|_p^pdx=0,\ \ 1\leq p<+\infty
\end{equation}
and
\begin{equation}\label{convergenceidentityp=infinity}
    \lim_{n\to\infty}\max_{\barI_n\in (\cD_m)^{n}}\sup_{x\in D_{\barI_n,\barW_n,\barb_n}}\left\|
    \left(\prod_{i=1}^nI_i\bW_i\right)x+\sum_{i=1}^n\left(\prod_{j=i+1}^n I_j\bW_j\right)I_i\bbb_i-\cN(x)
    \right\|_{\infty}=0,\ \ p=\infty.
\end{equation}

This necessary and sufficient condition together with Theorem \ref{expressionofcnprop} leads to useful necessary conditions and sufficient conditions for the sequence of ReLU neural networks to converge. They will be presented next. To this end, we first establish a technical lemma.

\begin{lemma}\label{convergenceoflinearfunctions}
Let $A_n\in\bR^{m\times d},\ b_n\in\bR^m$, $n\in\bN$ and let $1\le p\le+\infty$. Then the sequence of linear functions
$$
A_nx+b_n
$$
converges in $L^p(\Omega,\bR^m)$ on a bounded subset $\Omega\subseteq\bR^d$ that has positive Lebesgue measure if and only if both $\{A_n\}$ and $\{b_n\}$ converge.
\end{lemma}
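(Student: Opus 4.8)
The plan is to reduce both implications to a single two-sided estimate for the affine evaluation map and then let completeness and finite-dimensionality do the rest. For a matrix $C\in\bR^{m\times d}$ (equip $\bR^{m\times d}$ with any fixed matrix norm $\|C\|$) and a vector $c\in\bR^m$, consider the affine function $x\mapsto Cx+c$ on $\Omega$. Since $\Omega$ is bounded there is $R>0$ with $\|x\|\le R$ on $\Omega$, and $\Omega$ has finite measure; hence $x\mapsto Cx+c$ lies in $L^\infty(\Omega,\bR^m)\subseteq L^p(\Omega,\bR^m)$ for every $1\le p\le+\infty$, and a direct estimate gives the upper bound
$$
\|Cx+c\|_p\le \Gamma\,(\|C\|+\|c\|)
$$
with $\Gamma$ depending only on $\Omega$, $m$ and $p$. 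The substantive ingredient will be the matching lower bound
$$
\|Cx+c\|_p\ge \gamma\,(\|C\|+\|c\|)
$$
for some $\gamma>0$ independent of $(C,c)$.

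Granting these bounds, both directions follow quickly. For sufficiency, if $A_n\to A$ and $b_n\to b$, I would apply the upper bound to $(C,c)=(A_n-A,\,b_n-b)$ to conclude $\|A_nx+b_n-(Ax+b)\|_p\to0$, so the affine functions converge in $L^p(\Omega,\bR^m)$. For necessity, if $\{A_nx+b_n\}$ converges in $L^p(\Omega,\bR^m)$ then it is Cauchy there, and applying the lower bound to $(C,c)=(A_n-A_k,\,b_n-b_k)$ yields
$$
\gamma\,(\|A_n-A_k\|+\|b_n-b_k\|)\le \|A_nx+b_n-(A_kx+b_k)\|_p .
$$
Thus $\{A_n\}$ and $\{b_n\}$ are Cauchy in the complete spaces $\bR^{m\times d}$ and $\bR^m$, hence convergent.

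To obtain the lower bound I would argue abstractly rather than by a hands-on construction. The map $T(C,c):=Cx+c$ is linear from the finite-dimensional space $V:=\bR^{m\times d}\times\bR^m$ into $L^p(\Omega,\bR^m)$, and $\|(C,c)\|_*:=\|T(C,c)\|_p$ is a seminorm on $V$. The only point needing proof is that $\|\cdot\|_*$ is \emph{definite}, i.e. that $T$ is injective; once this is known, $\|\cdot\|_*$ is a genuine norm on the finite-dimensional space $V$, and equivalence of all norms on $V$ supplies the constant $\gamma$ at once (indeed both $\gamma$ and $\Gamma$). Injectivity is precisely where the positive-measure hypothesis is used: if $Cx+c=0$ for almost every $x\in\Omega$ and some row of $C$ were nonzero, then up to a null set $\Omega$ would lie in the proper affine hyperplane cut out by that row, forcing $|\Omega|=0$ and contradicting the hypothesis; hence $C=0$, and then $c=0$ as well.

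I expect the injectivity step to be the part requiring genuine care, since it is the only place the assumptions that $\Omega$ is measurable with positive measure actually enter, and it rests on the elementary fact that a proper affine subspace of $\bR^d$ has $d$-dimensional Lebesgue measure zero. Everything else—the upper bound, the Cauchy argument, and the passage from injectivity to the two-sided estimate via norm equivalence on $V$—is routine. A final point worth recording is that the reasoning is uniform in $p\in[1,+\infty]$: boundedness of $\Omega$ keeps $\|Cx+c\|_\infty$ finite, and the definiteness argument is insensitive to $p$, so the case $p=+\infty$ with the essential-supremum norm is covered without modification.
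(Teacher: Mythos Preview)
Your proof is correct and takes a genuinely different route from the paper's. For necessity, the paper constructs explicit test functions: it produces $g\in C(\Omega)$ with $\int_\Omega g\,dx=1$ and $\int_\Omega g(x)x_k\,dx=0$ for all $k$ (and analogous functions $h_l$ dual to the coordinate functions), then integrates $(A_nx+b_n)_j$ against these to isolate $b_{nj}$ and $A_{n,jl}$ and read off their convergence from the $L^p$-convergence. Your approach instead packages the problem structurally: the evaluation map $T(C,c)=Cx+c$ is linear from the finite-dimensional space $\bR^{m\times d}\times\bR^m$ into $L^p(\Omega,\bR^m)$, and once injectivity is checked (using that a proper affine hyperplane has Lebesgue measure zero, which is exactly where the positive-measure hypothesis enters), equivalence of norms on the finite-dimensional source yields the two-sided estimate at once. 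Your argument is cleaner and treats all $p\in[1,+\infty]$ uniformly, with no separate constructions; the paper's dual-function method is more hands-on and in principle exhibits $\lim b_n$ and $\lim A_n$ as concrete integrals, but the existence of the required $g$ and $h_l$ ultimately rests on the same linear-independence fact (that $1,x_1,\dots,x_d$ are linearly independent as elements of $L^p(\Omega)$) that drives your injectivity step.
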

\begin{proof}
We first prove the sufficient condition.
Suppose that both $\{A_n\}$ and $\{b_n\}$ converge. Then, clearly, $A_nx+b_n$ converges uniformly on $\Omega$ as $\Omega$ is bounded. As a result, $\{A_nx+b_n\}$ converges in $L^p(\Omega,\bR^m)$ for all $1\le p\le+\infty$.

Conversely, suppose that $\{A_nx+b_n\}$ converges to some limit function $\bu:=(u_1,u_2,\dots,u_m)^T$ in $L^p(\Omega,\bR^m)$ for some
$p\in[1,+\infty]$, where $\Omega$ has positive Lebesgue measure. Let $b_n:=(b_{n1},b_{n2},\dots,b_{nm})^T$ and $A_n:=[A_{n,jk}:1\le j\le m,1\le k\le d]$. Thus, for each $j$ with $1\le j\le m$, we have that
\begin{equation}\label{convergenceoflinearfunctionseq1}
(A_nx+b_n)_j=b_{nj}+\sum_{k=1}^d A_{n,jk}x_k\to u_j\mbox{ in }L^p(\Omega)\mbox{ as }n\to\infty.
\end{equation}
As $\Omega$ has a positive measure, $C(\Omega)$ is infinite-dimensional. Therefore, there exists $g\in C(\Omega)$ such that
$$
\int_\Omega g(x)x_kdx=0,\ \ \mbox{for all}\ \  1\le k\le d \mbox{ and } \int_\Omega g(x)dx=1.
$$
Equation (\ref{convergenceoflinearfunctionseq1}) ensures that
$$
\lim_{n\to\infty}b_{nj}=\lim_{n\to\infty}\int_\Omega g(x)(A_nx+b_n)_jdx=\int_\Omega g(x)u_j(x)dx,
$$
which implies that for every $1\le j\le m$, $b_{nj}$ converges as $n\to\infty$. Hence, $\{b_n\}$ converges. Similarly, for each $l$ with $1\le l\le d$, there exists a function $h_l\in C(\Omega)$ such that
$$
\int_\Omega h_l(x)x_kdx=\delta_{l,k},\ \ \mbox{for all}\ \ 1\le k\le d \ \mbox{ and }\ \int_\Omega h_l(x)dx=0.
$$
Again, by  (\ref{convergenceoflinearfunctionseq1}), we have that
$$
\lim_{n\to\infty}A_{n,jl}=\lim_{n\to\infty}\int_\Omega h_l(x)(A_nx+b_n)_jdx=\int_\Omega h_l(x)u_j(x)dx,
$$
which proves the convergence of $\{A_n\}$.
\end{proof}

We are now ready to prove the main result of this section.

\begin{theorem}\label{convergenceRELU}
Let $\bW:=\{\bW_n\}_{n=1}^\infty$ with $\bW_1\in \bR^{m\times d}$, $\bW_n\in \bR^{m\times m}$, $n>1$ and $\bbb:=\{\bbb_n\}_{n=1}^\infty$ with $\bbb_n\in\bR^m$ be a sequence of weight matrices and bias vectors, respectively.

\begin{enumerate}
\item {\bf (Necessary condition for convergence)} If the sequence of ReLU networks $\{\cN_n\}_{n=1}^\infty$ converges in $L^p([0,1]^d,\bR^m)$ then for all sequences of diagonal matrices $\bI=(I_n\in\cD_m:\ n\in\bN)$ such that the set
$$
\bigcap_{n=1}^\infty D_{\barI_n,\barW_n,\barb_n}
$$
has positive Lebesgue measure, the two limits
\begin{equation}\label{limit1}
\prod_{n=1}^\infty I_n\bW_n:=\lim_{n\to\infty}\prod_{i=1}^nI_i\bW_i
\end{equation}
and
\begin{equation}\label{limit2}
\lim_{n\to\infty}\sum_{i=1}^n\left(\prod_{j=i+1}^n I_j\bW_j\right)I_i\bbb_i
\end{equation}both exist.

\item {\bf (Sufficient condition for pointwise convergence)} If for all sequences of diagonal matrices $\bI=(I_n\in\cD_m:\ n\in\bN)$, the above two limits both exist, then the sequence of ReLU neural networks $\{\cN_n\}_{n=1}^\infty$ converges pointwise on $[0,1]^d$.
\end{enumerate}
\end{theorem}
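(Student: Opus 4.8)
The plan is to prove both parts by feeding the explicit linear representation of $\cN_n$ on each activation domain (Theorem \ref{expressionofcnprop}) into Lemma \ref{convergenceoflinearfunctions}, while exploiting the nested and partition structure of the activation domains. Throughout, for a fixed sequence $\bI=(I_n)$ write
$$
A_n:=\prod_{i=1}^nI_i\bW_i,\qquad b_n:=\sum_{i=1}^n\left(\prod_{j=i+1}^n I_j\bW_j\right)I_i\bbb_i,
$$
so that on $D_{\barI_n,\barW_n,\barb_n}$ the network reads $\cN_n(x)=A_nx+b_n$, and convergence of the two limits \eqref{limit1} and \eqref{limit2} is precisely convergence of $\{A_n\}$ and $\{b_n\}$.

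For the necessary condition I would fix a sequence $\bI$ for which $\Omega:=\bigcap_{n=1}^\infty D_{\barI_n,\barW_n,\barb_n}$ has positive Lebesgue measure. Since the domains are nested, $\Omega\subseteq D_{\barI_n,\barW_n,\barb_n}$ for every $n$, so on $\Omega$ the restriction of $\cN_n$ coincides with the linear map $A_nx+b_n$ for all $n$. Convergence of $\{\cN_n\}$ in $L^p([0,1]^d,\bR^m)$ restricts to convergence in $L^p(\Omega,\bR^m)$, because the integral over $\Omega$ is dominated by the integral over the cube (and likewise the essential supremum over $\Omega$ is dominated by that over the cube when $p=\infty$). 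As $\Omega$ is bounded and of positive measure, Lemma \ref{convergenceoflinearfunctions} applies and forces both $\{A_n\}$ and $\{b_n\}$ to converge, which is exactly the desired conclusion.

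For the sufficient condition I would fix an arbitrary $x\in[0,1]^d$ and first argue that $x$ selects a single infinite sequence of activation matrices. At each level $n$ the domains $\{D_{\barI_n,\barW_n,\barb_n}:\barI_n\in(\cD_m)^n\}$ partition the cube, so there is a unique $\barI_n=(I_1^{(n)},\dots,I_n^{(n)})$ with $x\in D_{\barI_n,\barW_n,\barb_n}$; the nesting property together with uniqueness at level $n$ shows that the first $n$ entries of the level-$(n+1)$ selection agree with the level-$n$ selection, so these patch into one infinite sequence $\bI(x)=(I_1,I_2,\dots)$ satisfying $x\in D_{\barI_n,\barW_n,\barb_n}$ for all $n$. (Equivalently, the activation pattern at layer $k$ is fixed by the forward pass through the first $k$ layers and is independent of the total depth $n$.) By Theorem \ref{expressionofcnprop}, $\cN_n(x)=A_nx+b_n$ for this sequence, and by hypothesis $A_n\to A$ and $b_n\to b$ for some $A,b$, whence $\cN_n(x)\to Ax+b$; since $x$ was arbitrary this is pointwise convergence on $[0,1]^d$.

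I expect the only real subtlety to be the consistency step in the sufficient direction, namely verifying that a fixed input $x$ determines one coherent infinite sequence of activation matrices rather than a depth-dependent family; the rest reduces to bookkeeping with Theorem \ref{expressionofcnprop} and Lemma \ref{convergenceoflinearfunctions}. A minor point to handle carefully is the restriction of $L^p$ convergence to the subset $\Omega$ in the necessary direction, including the $p=\infty$ case.
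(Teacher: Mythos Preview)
Your proposal is correct and follows essentially the same route as the paper: for part 1 you restrict the $L^p$ convergence to $\Omega=\bigcap_n D_{\barI_n,\barW_n,\barb_n}$ and invoke Lemma~\ref{convergenceoflinearfunctions}, and for part 2 you produce an infinite activation sequence for a fixed $x$ and apply Theorem~\ref{expressionofcnprop}. The only difference is cosmetic: you spell out the consistency of the activation sequence via uniqueness of the partition cells, whereas the paper simply asserts existence of such a sequence.
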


\begin{proof}
We prove the first claim of this theorem. If for a sequence of diagonal matrices $\bI=(I_n\in\cD_m:\ n\in\bN)$,
$$
D_\bI:=\bigcap_{n=1}^\infty D_{\barI_n,\barW_n,\barb_n}
$$
has positive Lebesgue measure, then by (\ref{convergenceidentity}) and (\ref{convergenceidentityp=infinity}),
$$
\|\cN_n-\cN\|_{L^p(D_\bI,\bR^m)}\le \|\cN_n-\cN\|_{L^p(D_{\barI_n,\barW_n,\barb_n},\bR^m)}\le\|\cN_n-\cN\|_{L^p([0,1]^d,\bR^m)}\to 0,\ n\to\infty.
$$
It implies
$$
\cN_n(x)=\left(\prod_{i=1}^nI_i\bW_i\right)x+\sum_{i=1}^n\left(\prod_{j=i+1}^n I_j\bW_j\right)I_i\bbb_i
$$
converges in $D_\bI$ with respect to the chosen $L^p$-norm $\|\cdot\|$. The proof may be completed by applying Lemma \ref{convergenceoflinearfunctions} with $\Omega:=D_\bI$.

Next, we establish the second claim.  For every $x\in[0,1]^d$, there exists a sequence of diagonal matrices $\bI=(I_n:\ I_n\in\cD_m,\ n\in\bN)$ such that
$$
x\in\bigcap_{n=1}^\infty D_{\barI_n,\barW_n,\barb_n}.
$$
Thus, by (\ref{compositions2})
$$
\cN_n(x)=\left(\prod_{i=1}^nI_i\bW_i\right)x+\sum_{i=1}^n\left(\prod_{j=i+1}^n I_j\bW_j\right)I_i\bbb_i,\ \ n\in\bN.
$$
Therefore, the existence of the two limits (\ref{limit1}) and (\ref{limit2}) are sufficient for pointwise convergence of $\{\cN_n(x)\}$.
\end{proof}

Theorem \ref{convergenceRELU} lays a foundation for studying the convergence issue of deep ReLU networks.

\section{Infinite Products of Matrices}
\setcounter{equation}{0}

This section is devoted to investigation of convergence of infinite products of matrices, which arise in the study of convergence of deep ReLU networks.

It follows from Theorem \ref{convergenceRELU} that the convergence of ReLU networks is reduced to existence of the two limits (\ref{limit1}) and (\ref{limit2}). Specifically, the convergence of the infinite product of matrices
\begin{equation}\label{infinitematrixproducts}
\prod_{n=1}^\infty I_n\bW_n, \ \ \mbox{for any}\ \ I_n\in \cD_m,
\end{equation}
appears in the two limits. We hence raise the following question:
What conditions on the matrices $\bW_n$, $n\in\bN$, will guarantee the convergence of the infinite product (\ref{infinitematrixproducts}) for all choices $I_n\in\cD_m$, $n\in\bN$? We first answer this question.

There is a well-known sufficient condition (\cite{Wedderburn}, page 127) for convergence of infinite products of matrices, which can be considered as a generalization of the convergence of infinite products $\prod_{n=1}^\infty(1+x_i)$ of scalars. The result states that if
\begin{equation}\label{sufficient}
\bW_n=I+\bP_n\mbox{ and }\sum_{n=1}^\infty\|\bP_n\|<+\infty,
\end{equation}
where $I$ is the identity matrix and $\|\cdot\|$ is any matrix norm satisfying $\|AB\|\le \|A\|\|B\|$, then the infinite product
$$
\prod_{n=1}^\infty \bW_n
$$
converges. This result was extended by Artzrouni \cite{Artzrouni}. Our question differs from those results in having the diagonal matrices $I_n$ in (\ref{infinitematrixproducts}) arbitrarily chosen from $\cD_m$. Nevertheless, we manage to prove that this condition (\ref{sufficient}) remains sufficient for the convergence of (\ref{infinitematrixproducts}). We proceed to establish this result.

Let $\|\cdot\|$ be a norm on $\bR^m$ that is nondecreasing on the modules of vector components:
\begin{equation}\label{nondecreasingvectornorm}
    \|\ba\|\le\|\bbb\| \mbox{ whenever }|a_i|\le |b_i|,\ 1\le i\le m,\ \ \mbox{for}\ \ba=(a_1,a_2,\dots,a_m), \bbb=(b_1,b_2,\dots,b_m)\in\bR^m.
\end{equation}
This requirement on a vector norm is mild and it is satisfied by the $\ell^p$-norms for all $1\le p\le +\infty$. We then define its induced matrix norm on $\bR^{m\times m}$, also denoted by $\|\cdot\|$, by
$$
\|A\|=\sup_{x\in\bR^m,x\ne0}\frac{\|Ax\|}{\|x\|},\ \ \mbox{for}\ \ A\in\bR^{m\times m}.
$$
Clearly, this matrix norm has the property that
\begin{equation}\label{matrixcon1}
\|AB\|\le \|A\|\|B\| \mbox{ for all matrices }A,B
\end{equation}
and
\begin{equation}\label{matrixcon2}
\|I_i\|\le 1 \mbox{ for each }I_i\in \cD_m.
\end{equation}
Note that the Frobenius norm satisfies (\ref{matrixcon1}) but does not satisfy (\ref{matrixcon2}).

Our first observation regards the product of activation matrices.

\begin{lemma}
If $j\ge 2$ and $I_i\in\cD_m$ for $i=j, j+1, \dots, n$, then
$$
\lim_{n\to \infty}\prod_{i=j}^nI_i=\cI_j,
$$
for some matrix $\cI_j\in\cD_m$, and there exist a positive integer $N$ such that
\begin{equation}\label{sufficienteq1}
\prod_{i=j}^nI_i=\cI_j, \ \ \mbox{whenever}\ \ n>N.
\end{equation}
\end{lemma}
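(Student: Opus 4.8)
The plan is to exploit the very special structure of the factors: each $I_i$ is a diagonal matrix with entries in $\{0,1\}$, and $\cD_m$ is closed under multiplication. Since diagonal matrices commute and their product is computed entrywise, the partial product $\prod_{i=j}^n I_i$ is again diagonal, with $(k,k)$ entry equal to $\prod_{i=j}^n (I_i)_{kk}$. As each of these factors is $0$ or $1$, the entry is $1$ exactly when every factor is $1$ and is $0$ otherwise; hence $\prod_{i=j}^n I_i\in\cD_m$ for every $n$, and its support is the intersection of the supports of the factors,
$$
\supp\Bigl(\prod_{i=j}^n I_i\Bigr)=\bigcap_{i=j}^n \supp I_i.
$$

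The key observation is then that this family of supports is nested and non-increasing in $n$: passing from $n$ to $n+1$ simply intersects with the additional set $\supp I_{n+1}$, so that
$$
\supp\Bigl(\prod_{i=j}^{n+1} I_i\Bigr)\subseteq \supp\Bigl(\prod_{i=j}^n I_i\Bigr)\subseteq\{1,2,\dots,m\}.
$$
A non-increasing sequence of subsets of the finite set $\{1,2,\dots,m\}$ must stabilize, because the cardinalities form a non-increasing sequence of non-negative integers and can strictly decrease only finitely often. Setting $S_j:=\bigcap_{i=j}^\infty \supp I_i$, this yields a positive integer $N$ with $\bigcap_{i=j}^n \supp I_i = S_j$ for all $n>N$. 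Letting $\cI_j\in\cD_m$ be the activation matrix determined by $\supp \cI_j = S_j$ (recall an activation matrix is uniquely fixed by its support), we obtain $\prod_{i=j}^n I_i=\cI_j$ for all $n>N$, which is exactly the eventual-constancy statement (\ref{sufficienteq1}) and a fortiori gives $\lim_{n\to\infty}\prod_{i=j}^n I_i=\cI_j$.

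As a plan this is essentially complete, and I do not anticipate a genuine obstacle. The only step demanding any care is the passage from ``the supports are non-increasing'' to ``the supports are eventually constant,'' and this rests solely on the finiteness of the index set $\{1,2,\dots,m\}$. The hypothesis $j\ge 2$ is not needed for the argument itself; it is presumably recorded only to align with the indexing used in the later application of this lemma to products of the form $\prod I_i\bW_i$.
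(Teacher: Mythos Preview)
Your proposal is correct and follows essentially the same approach as the paper: both arguments identify $\supp\bigl(\prod_{i=j}^n I_i\bigr)=\bigcap_{i=j}^n\supp I_i$, observe the resulting supports are nested and non-increasing in the finite set $\{1,\dots,m\}$, and conclude eventual stabilization (you via the cardinality drop, the paper via finiteness of $\cD_m$). Your remark that the hypothesis $j\ge2$ is inessential to the lemma itself is also accurate.
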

\begin{proof}
We first note that for all $n\in\bN$, there holds $\prod_{i=j}^nI_i\in\cD_m$ and
$$
\supp\left(\prod_{i=j}^nI_i\right)=\bigcap_{i=j}^n\supp I_i.
$$
It follows that
\begin{equation}\label{UpperBoundSet}
    \emptyset \subseteq\supp\left(\prod_{i=j}^{n+1}I_i\right)\subseteq \supp\left(\prod_{i=j}^nI_i\right),
\end{equation}
where $\emptyset$ denotes the empty set.
Therefore, the limit
$$
\lim_{n\to\infty }\supp\left(\prod_{i=j}^nI_j\right)=\bigcap_{i=j}^\infty \supp I_i
$$
exists. Note that there exists a unique diagonal matrix $\cI_j\in\cD_m$ such that
$$
\supp \cI_j=\bigcap_{i=j}^\infty \supp I_i
$$
and thus,
$$
\lim_{n\to\infty }\prod_{i=j}^nI_i=\cI_j.
$$

Since the set $\cD_m$ contains only a finite number of matrices, according to \eqref{UpperBoundSet}, there exists a positive integer $N$ such that
$$
\supp\left(\prod_{i=j}^nI_i\right)=\supp\left(\prod_{i=j}^NI_i\right), \ \ \mbox{for all}\ \ n>N.
$$
Thus, there exists  a unique diagonal matrix $\cI_j\in\cD_m$ such that
$$
\cI_j=\prod_{i=j}^NI_i
$$
and
$$
\prod_{i=j}^nI_i=\cI_j, \ \ \mbox{for all}\ \ n>N.
$$
\end{proof}

\begin{lemma}\label{sufficientlemma}
If a sequence $\{a_n\}_{n=1}^\infty$ satisfies $a_n\geq 0$ and $\sum_{n=1}^\infty a_n<+\infty$, then for all $p\in\bN$,
\begin{equation}\label{sufficientlemma2}
\sum_{i=p+1}^\infty a_i+\sum_{l=2}^\infty\sum_{\substack{1\le i_1<i_2<\cdots<i_l\\ i_l>p}}\prod_{k=1}^la_{i_k}\le \left(\sum_{i=p+1}^\infty a_i\right)\exp\left(\sum_{i=1}^\infty a_i\right).
\end{equation}
\end{lemma}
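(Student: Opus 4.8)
The plan is to recognize the left-hand side as a single sum over all finite strictly increasing index sequences whose \emph{largest} index exceeds $p$, and then to organize this sum according to the value of that largest index. Since every $a_n\ge0$, all series and products involved have non-negative terms, so any regrouping or interchange of summation is automatically justified: the manipulations take place in $[0,+\infty]$, and the final bound itself will certify finiteness. Writing the two pieces on the left as one expression, the left-hand side equals
$$
\sum_{l=1}^\infty\ \sum_{\substack{1\le i_1<\cdots<i_l\\ i_l>p}}\ \prod_{k=1}^l a_{i_k},
$$
where the length-one ($l=1$) terms reproduce exactly $\sum_{i=p+1}^\infty a_i$.

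Next I would factor each term by its largest index. Setting $j:=i_l$, the constraint $i_l>p$ becomes $j>p$, while the remaining indices $i_1<\cdots<i_{l-1}$ range over an arbitrary (possibly empty) subset of $\{1,\dots,j-1\}$. Summing the product over all such subsets of $\{1,\dots,j-1\}$ yields exactly $\prod_{k=1}^{j-1}(1+a_k)$, because expanding this product generates precisely one monomial per subset (the empty subset contributing the factor $1$, which accounts for the length-one sequences). Hence the left-hand side collapses to
$$
\sum_{j=p+1}^\infty a_j\prod_{k=1}^{j-1}(1+a_k).
$$

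Finally I would apply the elementary inequality $1+a_k\le e^{a_k}$ factorwise, giving $\prod_{k=1}^{j-1}(1+a_k)\le\exp\bigl(\sum_{k=1}^{j-1}a_k\bigr)\le\exp\bigl(\sum_{k=1}^\infty a_k\bigr)$ for every $j$, where the last step uses $a_k\ge0$ so that the partial sums increase to the full sum. Pulling this constant factor outside the sum over $j$ leaves $\bigl(\sum_{j=p+1}^\infty a_j\bigr)\exp\bigl(\sum_{k=1}^\infty a_k\bigr)$, which is precisely the claimed right-hand side.

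The only genuinely delicate point is the combinatorial regrouping in the second step: recognizing that indexing by the largest index converts the nested sum over increasing sequences into the clean generating product $\prod_{k<j}(1+a_k)$. Everything afterward is the routine $1+x\le e^x$ estimate. Because the summands are non-negative throughout, no convergence issue arises before the final inequality, which itself establishes that the left-hand side is finite.
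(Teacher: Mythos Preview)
Your argument is correct. The regrouping by the largest index $j=i_l$ is exactly the identity
\[
\sum_{l\ge 1}\ \sum_{\substack{1\le i_1<\cdots<i_l\\ i_l>p}}\prod_{k=1}^l a_{i_k}
=\sum_{j>p} a_j\prod_{k=1}^{j-1}(1+a_k),
\]
and the non-negativity hypothesis legitimizes every rearrangement. The finish via $1+a_k\le e^{a_k}$ is clean.

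This is a genuinely different route from the paper's proof. The paper expands $\exp\!\bigl(\sum_i a_i\bigr)$ as $\sum_{l\ge 0}\frac1{l!}\bigl(\sum_i a_i\bigr)^l$, multiplies through by the tail $\sum_{i>p}a_i$, and then for each fixed $l$ uses the counting inequality
\[
\sum_{\substack{1\le i_1<\cdots<i_l\\ i_l>p}}\prod_{k=1}^la_{i_k}\le \Bigl(\sum_{i>p}a_i\Bigr)\frac1{(l-1)!}\Bigl(\sum_i a_i\Bigr)^{l-1},
\]
which comes from selecting $i_l$ as the distinguished index $>p$ and bounding the sum over the remaining strictly increasing $(l-1)$-tuple by $1/(l-1)!$ times the full unrestricted $(l-1)$-fold sum. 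Your approach instead collapses the entire left-hand side to an \emph{exact} closed form before estimating, which is shorter and more transparent; the paper's approach, by keeping the length $l$ as the outer variable, stays closer to the way the lemma is later applied (e.g.\ inequality~(4.5), where sums are still stratified by~$l$). Both buy the same inequality with the same constant.
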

\begin{proof}
Recall the expansion
\begin{equation}\label{exp}
    e^x=\sum_{l=0}^\infty \frac{x^l}{l!}, \ \ \mbox{for all}\ \ x\in \bR.
\end{equation}
Substituting $x:=\sum_{i=1}^\infty a_i$ in equation \eqref{exp} yields that
$$
\exp\left(\sum_{i=1}^\infty a_i\right)=\sum_{l=0}^\infty \frac1{l!}\left(\sum_{i=1}^\infty a_i\right)^l.
$$
Multiplying both sides of the above equation by the sum $\sum_{i=p+1}^\infty a_i$ gives that
\begin{equation}\label{sumexp}
    \left(\sum_{i=p+1}^\infty a_i\right)\exp\left(\sum_{i=1}^\infty a_i\right)=\left(\sum_{i=p+1}^\infty a_i\right)\sum_{l=0}^\infty \frac1{l!}\left(\sum_{i=1}^\infty a_i\right)^l.
\end{equation}
Note that for $l\ge1$
$$
\left(\sum_{i=p+1}^\infty a_i\right)\frac1{(l-1)!}\left(\sum_{i=1}^\infty a_i\right)^{l-1}\ge \sum_{\substack{1\le i_1<i_2<\cdots<i_l\\ i_l>p}}\prod_{k=1}^la_{i_k}.
$$
Combining this inequality with equation \eqref{sumexp} proves the desired inequality of this lemma.
\end{proof}

When the infinite sum in Lemma \ref{sufficientlemma} is reduced to a finite sum, we obtain the following special result that was originally used in \cite{Wedderburn} without a proof.
If $a_i$, $1\le i\le n$, are nonnegative numbers, by setting $a_i=0$ for $i>n$, we then obtain from  Lemma \ref{sufficientlemma} that for all $p<n$,
\begin{equation}\label{sufficientlemma1}
\sum_{i=p+1}^n a_i+\sum_{l=2}^n\sum_{\substack{1\le i_1<i_2<\cdots<i_l\le n\\ i_l>p}}\prod_{k=1}^la_{i_k}\le \left(\sum_{i=p+1}^na_i\right)\exp\left(\sum_{i=1}^na_i\right).
\end{equation}

We next provide a sufficient condition on the matrices which ensures convergence of the infinite product (\ref{infinitematrixproducts}). In \cite{Artzrouni}, it was proved that if
$$
\sum_{i=1}^\infty \|A_i\|<+\infty
$$
and
\begin{equation}\label{artzrounicon}
\|U_i\|=1\mbox{ for all }i\in\bN
\end{equation}
then
\begin{equation}\label{artzrouni}
\prod_{i=1}^\infty (U_i+A_i)
\end{equation}
converges. Our infinite products (\ref{infinitematrixproducts}) differ from (\ref{artzrouni}) in that we have $I_n$'s arbitrarily chosen from $\cD_m$. Also, the assumption (\ref{artzrounicon}) does not apply to our question. We shall make use of the special property (\ref{sufficienteq1}) of $I_n$'s, which making our approach more direct and simple than that in \cite{Artzrouni}.

\begin{theorem}\label{sufficientthm}
Let $\|\cdot\|$ be a matrix norm satisfying (\ref{matrixcon1}) and (\ref{matrixcon2}). If $\bW_1\in\bR^{m\times d}$, $\bW_n\in\bR^{m\times m}$,  $n\ge2$, are matrices satisfying
\begin{equation}\label{sufficient2}
\bW_n=I+\bP_n, n\ge 2,\mbox{ and }\sum_{n=2}^\infty\|\bP_n\|<+\infty,
\end{equation}
then the infinite product (\ref{infinitematrixproducts}) converges for all $I_n\in\cD_m$, $n\in\bN$.
\end{theorem}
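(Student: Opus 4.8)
The plan is first to reduce the convergence of the full product to that of the product of the square factors, and then to exhibit the latter as a Cauchy sequence in $\bR^{m\times m}$. Put $a_i:=\|\bP_i\|$ and, for $n\ge 2$, set $Q_n:=\prod_{i=2}^n I_i\bW_i$. Since $\prod_{i=1}^n I_i\bW_i=Q_n(I_1\bW_1)$ and $I_1\bW_1\in\bR^{m\times d}$ is a fixed matrix, right multiplication by it is continuous, so it suffices to prove that $\{Q_n\}_{n\ge2}$ converges. Using \eqref{matrixcon1}, \eqref{matrixcon2}, $\|I\|=1$ and $\|\bW_i\|=\|I+\bP_i\|\le 1+a_i$, I would first record the uniform bound $\|Q_n\|\le\prod_{i=2}^n(1+a_i)\le\exp\bigl(\sum_{i=2}^\infty a_i\bigr)=:M<\infty$, which is finite by \eqref{sufficient2}.

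For $n>p\ge 2$ I would factor $Q_n=A_{p,n}Q_p$ with $A_{p,n}:=\prod_{i=p+1}^n I_i\bW_i=\prod_{i=p+1}^n(I_i+I_i\bP_i)$, and expand this last product over the subsets of $\{p+1,\dots,n\}$ on which the factor $I_i\bP_i$ is selected. Separating the unique term in which no $\bP$-factor appears gives $A_{p,n}=G_{p,n}+E_{p,n}$, where $G_{p,n}:=\prod_{i=p+1}^n I_i$ is a product of activation matrices and $E_{p,n}$ collects all the remaining terms. Each such term, indexed by a nonempty subset $S$, has norm at most $\prod_{i\in S}a_i$ (again by \eqref{matrixcon1}, \eqref{matrixcon2}), so $\|E_{p,n}\|\le\prod_{i=p+1}^n(1+a_i)-1\le\exp\bigl(\sum_{i=p+1}^\infty a_i\bigr)-1=:\eta_p$, and $\eta_p\to0$ as $p\to\infty$.

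The main obstacle is that the naive telescoping $Q_n-Q_p=(A_{p,n}-I)Q_p$ is useless here: because the diagonal matrices $I_i$ are singular, the pure-activation factor $G_{p,n}$ does not tend to the identity, so $A_{p,n}\not\to I$. This is exactly where the preceding lemma on products of activation matrices is decisive: it guarantees that for each fixed $p$ there is an integer $N(p)$ and a matrix $\cI_{p+1}\in\cD_m$ with $G_{p,n}=\cI_{p+1}$ for all $n>N(p)$. Hence, for $n,n'>N(p)$ the activation parts coincide, $G_{p,n}=G_{p,n'}=\cI_{p+1}$, and writing $Q_n-Q_{n'}=(G_{p,n}-G_{p,n'})Q_p+(E_{p,n}-E_{p,n'})Q_p=(E_{p,n}-E_{p,n'})Q_p$ yields $\|Q_n-Q_{n'}\|\le(\|E_{p,n}\|+\|E_{p,n'}\|)\|Q_p\|\le 2\eta_p M$. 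Given $\varepsilon>0$ I would choose $p$ so large that $2\eta_p M<\varepsilon$ and then $N(p)$ as above; this shows $\{Q_n\}$ is Cauchy, hence convergent, and the theorem follows.

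Lemma \ref{sufficientlemma} is not strictly needed for this streamlined factorization argument, but it offers an alternative route: expanding $Q_n$ itself over all subsets of $\{2,\dots,n\}$ and bounding the tail contribution indexed by those subsets whose largest index exceeds $p$ gives, via \eqref{sufficientlemma1}, an explicit convergence estimate of the same flavor.
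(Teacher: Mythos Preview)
Your proof is correct. Both your argument and the paper's rest on the same two ingredients---the expansion $\prod I_i\bW_i=\prod(I_i+I_i\bP_i)$ and the stabilization lemma for products of activation matrices---but you organize them more economically. The paper expands the \emph{full} product $\prod_{i=2}^n(I_i+I_i\bP_i)$ as in \eqref{sufficienteq3}, invokes the stabilization lemma at every starting index $2\le j\le p+1$ to see which terms in the expansions for $n$ and $n'$ coincide, and then bounds the surviving terms by the combinatorial estimate of Lemma~\ref{sufficientlemma}. Your factorization $Q_n=A_{p,n}Q_p$ localizes the expansion to the tail $A_{p,n}$ alone, requires stabilization only for the single product $G_{p,n}=\prod_{i=p+1}^nI_i$, and replaces Lemma~\ref{sufficientlemma} by the elementary bound $\prod_{i>p}(1+a_i)-1\le\exp\bigl(\sum_{i>p}a_i\bigr)-1$. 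The payoff is a shorter Cauchy estimate; the paper's route, on the other hand, yields the more explicit inequality \eqref{difference-in-norm}, which it reuses later (for instance in deriving \eqref{differencen-infty}).
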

\begin{proof}
It suffices to prove that the infinite product of matrices
$$
\prod_{n=2}^\infty I_n\bW_n
$$
converges under the assumed conditions.

We compute that
$$
\prod_{i=2}^n I_i\bW_i=\prod_{i=2}^n (I_i+I_i\bP_i).
$$
Expanding the product on the right hand side of the above equation yields
\begin{equation}\label{sufficienteq3}
\prod_{i=2}^n I_i\bW_i=\prod_{i=2}^{n} I_i
+\sum_{l=1}^{n-1}\sum_{2\le j_1<j_2<\cdots<j_l\le n}\left(\prod_{k=j_l+1}^nI_k\right)\left(\prod_{i=2}^l\left(I_{j_i}\bP_{j_i}\prod_{k=j_{i-1}+1}^{j_i-1}I_{k}\right)\right)  I_{j_1}\bP_{j_1}\left(\prod_{k=2}^{j_1-1}I_k\right).
\end{equation}
According to (\ref{sufficienteq1}), we assume that $n'>n>p$ are large enough integers so that
$$
\prod_{i=j}^nI_i=\prod_{i=j}^{n'}I_i=\cI_j,\ \ 2\le j\le p+1
$$
for some $\cI_j\in \cD_m$. This fact ensures that for $1\le l\le n-1$, the terms in
$$
\sum_{2\le j_1<j_2<\cdots<j_l\le n}\biggl(\prod_{k=j_l+1}^nI_k\biggr)\biggl(\prod_{i=2}^l\biggl(I_{j_i}\bP_{j_i}\prod_{k=j_{i-1}+1}^{j_i-1}I_{k}\biggr)\biggr)  I_{j_1}\bP_{j_1}\biggl(\prod_{k=2}^{j_1-1}I_k\biggr),
$$
which appear in (\ref{sufficienteq3}) and those in
$$
\sum_{2\le j_1<j_2<\cdots<j_l\le n'}\biggl(\prod_{k=j_l+1}^{n'}I_k\biggr)\biggl(\prod_{i=2}^l\biggl(I_{j_i}\bP_{j_i}\prod_{k=j_{i-1}+1}^{j_i-1}I_{k}\biggr)\biggr)  I_{j_1}\bP_{j_1}\biggl(\prod_{k=2}^{j_1-1}I_k\biggr)
$$
which appear in (\ref{sufficienteq3}) with $n$ replaced by $n'$
are identical if $j_l\le p$.

Now, for $n'>n>p$ we consider the difference
$$
\prod_{i=2}^n I_i\bW_i-\prod_{i=2}^{n'} I_i\bW_i
$$
and use  (\ref{sufficienteq3}) with the fact pointed out above so that the identical terms appearing in the two products are canceled. By
applying the matrix norm to the resulting sum, we get by (\ref{matrixcon1}) and (\ref{matrixcon2})
$$
\begin{aligned}
\biggl\|\prod_{i=2}^n I_i\bW_i-\prod_{i=2}^{n'} I_i\bW_i\biggr\|\le &\sum_{j=p+1}^n \|\bP_j\|+\sum_{j=p+1}^{n'} \|\bP_j\|+\sum_{l=2}^{n-1}\sum_{\substack{2\le j_1<j_2<\cdots<j_l\le n\\j_l>p}}\prod_{k=1}^l\|\bP_{j_k}\|\\
&+\sum_{l=2}^{n-1}\sum_{\substack{2\le j_1<j_2<\cdots<j_l\le n'\\j_l>p}}\prod_{k=1}^l\|\bP_{j_k}\|+\sum_{l=n}^{n'-1}\sum_{2\le j_1<j_2<\cdots<j_l\le n'}\prod_{k=1}^l\|\bP_{j_k}\|.
\end{aligned}
$$
Invoking inequality (\ref{sufficientlemma1}) we obtain from the last inequality for large enough positive integers $n'>n$
\begin{equation}\label{difference-in-norm}
\biggl\|\prod_{i=2}^n I_i\bW_i-\prod_{i=2}^{n'} I_i\bW_i\biggr\|\le 2 \biggl(\sum_{i=p+1}^{\infty}\|\bP_i\|\biggr)\exp\biggl(\sum_{i=2}^\infty\|\bP_i\|\biggr).
\end{equation}

Finally, the second inequality of \eqref{sufficient2} ensures that for  $\varepsilon>0$, there exists $p\in\bN$ such that
\begin{equation}\label{sufficienteq2}
\sum_{j=p+1}^\infty \|\bP_j\|<\varepsilon.
\end{equation}
Using estimate (\ref{sufficienteq2}) in the right hand side of \eqref{difference-in-norm} yields
$$
\left\|\prod_{i=2}^n I_i\bW_i-\prod_{i=2}^{n'} I_i\bW_i\right\|\le 2\varepsilon\exp\left(\sum_{i=2}^\infty\|\bP_i\|\right),
$$
which together with the second inequality of condition  (\ref{sufficient2}) proves the convergence of the infinite product (\ref{infinitematrixproducts}).
\end{proof}

We next deal with the second limit (\ref{limit2}). Our first task is to formulate a necessary condition, showing that the linear function $\bW_nx+b_n$ on the $n$-th layer will be close to the identity mapping for sufficiently large $n$.

\begin{theorem}\label{proplimit2}
Let $\|\cdot\|$ be a norm on $\bR^m$ that satisfies (\ref{nondecreasingvectornorm})  and $\|\cdot\|$ be its induced matrix norm. Suppose that the matrices $\bW_n$, $n\ge 2$, satisfy
\begin{equation}\label{sufficient3}
\bW_n=I+\bP_n, n\ge 2, \ \sum_{n=2}^\infty\|\bP_n\|<+\infty,\mbox{ and }\sum_{i=n+1}^\infty \|\bP_i\|=o\left(\frac1n\right),\ n\to\infty,
\end{equation}
and that the vectors $\bbb_n$, $n\in\bN$, are bounded. If the limit (\ref{limit2}) exists for all choices of matrices $I_i\in \cD_m$, $i\in\bN$, then
\begin{equation}\label{proplimit2eq2}
\lim_{n\to\infty}\bW_n=I
\end{equation}
and
\begin{equation}\label{proplimit2eq2*}
\lim_{n\to\infty}\bbb_n=0.
\end{equation}
\end{theorem}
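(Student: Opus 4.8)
The plan is to treat the two conclusions separately. The statement $\lim_{n\to\infty}\bW_n=I$ in \eqref{proplimit2eq2} is immediate: the first line of \eqref{sufficient3} gives $\bW_n=I+\bP_n$, and since $\sum_{n=2}^\infty\|\bP_n\|<+\infty$ its terms satisfy $\|\bP_n\|\to0$, whence $\bW_n\to I$. So the real content lies in proving \eqref{proplimit2eq2*}, that $\bbb_n\to0$, and my key device will be an algebraic identity that peels off the last factor from the partial sums in \eqref{limit2}.

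For a fixed sequence $\bI=(I_n\in\cD_m:\ n\in\bN)$, write $s_n:=\sum_{i=1}^n\bigl(\prod_{j=i+1}^n I_j\bW_j\bigr)I_i\bbb_i$ for the $n$-th partial sum appearing in \eqref{limit2}. Factoring the common leftmost factor $I_n\bW_n$ out of every summand with $i\le n-1$ and separating the $i=n$ term, whose matrix product is empty and hence equals $I$, yields the one-step recursion
$$
s_n=I_n\bW_n\,s_{n-1}+I_n\bbb_n,\qquad n\ge2 .
$$
This is the main step. I then specialize to the single choice $I_n=I$ for every $n$, which is a legitimate member of $\cD_m$ and is therefore covered by the hypothesis that the limit \eqref{limit2} exists for all choices. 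For this choice the recursion becomes $s_n=\bW_n s_{n-1}+\bbb_n$; substituting $\bW_n=I+\bP_n$ and solving for $\bbb_n$ gives
$$
\bbb_n=(s_n-s_{n-1})-\bP_n s_{n-1},\qquad n\ge2 .
$$

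It then remains to let $n\to\infty$ in this identity. By hypothesis $\{s_n\}$ converges, so $s_n-s_{n-1}\to0$ and $\{s_{n-1}\}$ is bounded, say $\|s_{n-1}\|\le M$; combined with $\|\bP_n\|\to0$ and the submultiplicativity \eqref{matrixcon1}, this gives $\|\bP_n s_{n-1}\|\le\|\bP_n\|\,M\to0$. Hence both terms on the right-hand side vanish and $\bbb_n\to0$, establishing \eqref{proplimit2eq2*}. I expect the main (and essentially the only) obstacle to be recognizing the recursion and seeing that the all-identity activation sequence isolates $\bbb_n$ without annihilating any of its components; once that is in hand, the conclusion is just the Cauchy property of a convergent sequence together with $\|\bP_n\|\to0$. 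In the route I take, no essential use is made of varying $I_n$, of the boundedness of $\bbb_n$, or of the refined tail rate $\sum_{i>n}\|\bP_i\|=o(1/n)$ in \eqref{sufficient3}; these remain available if one instead prefers to argue through a general $\bI$ and an accumulation estimate on the differences $s_n-s_{n-1}$.
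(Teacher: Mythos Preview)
Your argument is correct. The one-step recursion $s_n=I_n\bW_n\,s_{n-1}+I_n\bbb_n$ is valid, and since the identity matrix belongs to $\cD_m$, the specialization $I_n=I$ is covered by the hypothesis; the resulting identity $\bbb_n=(s_n-s_{n-1})-\bP_n s_{n-1}$ then yields $\bbb_n\to0$ immediately from the convergence of $\{s_n\}$ and $\|\bP_n\|\to0$.

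This is a genuinely different and more economical route than the paper's. The paper also fixes $I_i=I$, but instead of the recursion it compares the finite products $\prod_{j=i+1}^n\bW_j$ with the infinite products $\prod_{j=i+1}^\infty\bW_j$, uses the boundedness of $\{\bbb_n\}$ together with the tail rate $\sum_{j>n}\|\bP_j\|=o(1/n)$ to pass from the convergence of $\sum_{i=1}^n(\prod_{j=i+1}^n\bW_j)\bbb_i$ to that of $\sum_{i=1}^n(\prod_{j=i+1}^\infty\bW_j)\bbb_i$, infers that $(\prod_{j=i+1}^\infty\bW_j)\bbb_i\to0$, and finally shows that $\prod_{j=i+1}^\infty\bW_j$ is eventually invertible with uniformly bounded inverse in order to extract $\bbb_i\to0$. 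Your approach bypasses all of this and, as you observe, uses neither the boundedness of $\{\bbb_n\}$ nor the $o(1/n)$ tail condition in \eqref{sufficient3}; it thus shows those two hypotheses are superfluous for the stated conclusion. The paper's longer argument does, however, produce intermediate information about the infinite products $\prod_{j=i+1}^\infty\bW_j$ that connects with the machinery developed in the rest of Section~4.
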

\begin{proof}
The second inequality of condition (\ref{sufficient3}) implies $\|\bP_n\|\to 0$ as $n\to\infty$. Thus, using the first equation of condition  (\ref{sufficient3}), we conclude equation \eqref{proplimit2eq2}.

It remains to prove equation  \eqref{proplimit2eq2*}.
Since the limit (\ref{limit2}) exists for all $I_i\in \cD_m$, we let $I_i=I$ for $i\ge 2$ to get that
\begin{equation}\label{proplimit2eq3}
\lim_{n\to\infty}\sum_{i=1}^n\biggl(\prod_{j=i+1}^n \bW_j\biggr)\bbb_i
\end{equation}
exists. By similar analysis as those in the proof of Theorem \ref{sufficientthm}, we conclude that
\begin{equation}\label{differencen-infty}
\biggl\|\prod_{j=i+1}^n \bW_j-\prod_{j=i+1}^\infty \bW_j\biggr\|\le \biggl(\sum_{j=n+1}^\infty \|\bP_j\|\biggr)\exp\biggl(\sum_{j=i+1}^\infty\|\bP_j\|\biggr)\le C_1\biggl(\sum_{j=n+1}^\infty \|\bP_j\|\biggr),
\end{equation}
where
$$
C_1:=\exp\biggl(\sum_{j=2}^\infty\|\bP_j\|\biggr).
$$
Noting that $\bbb_n$, $n\in\bN$, are bounded, we may let
\begin{equation}\label{sup-bn}
C_2:=\sup_{n\in\bN}\|\bbb_n\|<+\infty.
\end{equation}
Employing \eqref{differencen-infty} and \eqref{sup-bn} yields the estimate
$$
\begin{aligned}
\biggl\|\sum_{i=1}^n\biggl(\prod_{j=i+1}^n \bW_j\biggr)\bbb_i-\sum_{i=1}^n\biggl(\prod_{j=i+1}^\infty \bW_j\biggr)\bbb_i\biggr\|&\le \sum_{i=1}^n\biggl\|\prod_{j=i+1}^n \bW_j-\prod_{j=i+1}^\infty \bW_j\biggr\|\|\bbb_i\|\\
&\le C_1C_2n\biggl(\sum_{j=n+1}^\infty \|\bP_j\|\biggr).
\end{aligned}
$$
By the third condition in (\ref{sufficient3}) and the existence of the limit (\ref{proplimit2eq3}), we observe that
$$
\lim_{n\to\infty}\sum_{i=1}^n\biggl(\prod_{j=i+1}^\infty \bW_j\biggr)\bbb_i
$$
also exists. It follows that
\begin{equation}\label{proplimit2eq4}
\lim_{i\to\infty}\biggl(\prod_{j=i+1}^\infty \bW_j\biggr)\bbb_i=0.
\end{equation}
Notice that
$$
\prod_{j=i+1}^\infty\bW_j-I=\prod_{j=i+1}^\infty(I+\bP_j)-I=\sum_{j=i+1}^\infty \bP_j+\sum_{l=2}^{\infty}\sum_{i+1\le j_1<j_2<\cdots <j_l} \prod_{k=1}^l \bP_{j_k}.
$$
It follows from Lemma \ref{sufficientlemma} with $a_n:=\|\bP_n\|$ that for $i$ big enough,
$$
\biggl\|\prod_{j=i+1}^\infty\bW_j-I\biggr\|\le \sum_{j=i+1}^\infty \|\bP_j\|+\sum_{l=2}^{\infty}\sum_{i+1\le j_1<j_2<\cdots <j_l} \prod_{k=1}^l \|\bP_{j_k}\|\le \biggl(\sum_{j=i+1}^\infty \|\bP_j\|\biggr)\exp\biggl(\sum_{j=i+1}^\infty \|\bP_j\|\biggr) .
$$
Therefore, for big enough $i$,
$$
\biggl\|\prod_{j=i+1}^\infty\bW_j-I\biggr\|<\frac12.
$$
By a classical result from function analysis (\cite{Lax}, page 193), we conclude that for big enough $i$,
$$
\prod_{j=i+1}^\infty\bW_j=I+\biggl(\prod_{j=i+1}^\infty\bW_j-I\biggr)
$$
is invertiable and its inverse satisfies
$$
\biggl\|\biggl(\prod_{j=i+1}^\infty\bW_j\biggr)^{-1}\biggr\|\le \frac{1}{1-\biggl\|\prod_{j=i+1}^\infty\bW_j-I\biggr\|}\le 2.
$$
Consequently, for big enough $i$,
$$
\|\bbb_i\|=\biggl\|\biggl(\prod_{j=i+1}^\infty\bW_j\biggr)^{-1}\biggl(\prod_{j=i+1}^\infty \bW_j\biggr)\bbb_i\biggr\|\le 2\biggl\|\biggl(\prod_{j=i+1}^\infty \bW_j\biggr)\bbb_i\biggr\|,
$$
which together with (\ref{proplimit2eq4}) ensures the validity of equation  \eqref{proplimit2eq2*}.
\end{proof}

The necessary conditions given in Theorem \ref{proplimit2} for a ReLU network to converge provide mathematical guidelines for further construction of deep ReLU networks.

Our next task is to establish a useful sufficient condition guaranteeing the existence of limit (\ref{limit2}).

\begin{theorem}\label{theoremonlimit2}
Let $\|\cdot\|$ be a norm on $\bR^m$ that satisfies (\ref{nondecreasingvectornorm})  and $\|\cdot\|$ be its induced matrix norm. If
\begin{equation}\label{limit2sufficient3}
\sum_{n=1}^\infty \|\bbb_n\|<+\infty,
\end{equation}
\begin{equation}\label{limit2sufficient1}
\prod_{j=i}^\infty I_j\bW_j\mbox{ converges for every }i\ge2,
\end{equation}
and there exists a positive constant $C$ such that
\begin{equation}\label{limit2sufficient2}
\prod_{j=i}^n \|\bW_j\|\le C\mbox{ for all }2\le i\le n<+\infty,
\end{equation}
then the limit (\ref{limit2}) exists.
\end{theorem}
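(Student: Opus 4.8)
The plan is to produce an explicit candidate for the limit \eqref{limit2} and then show the partial sums converge to it by a truncation argument. Write $S_n:=\sum_{i=1}^n\bigl(\prod_{j=i+1}^n I_j\bW_j\bigr)I_i\bbb_i$ for the $n$-th partial sum, and for each $i\ge 1$ set $P_{i+1}:=\prod_{j=i+1}^\infty I_j\bW_j$, which exists by hypothesis \eqref{limit2sufficient1} since $i+1\ge2$. First I would record the uniform bound $\|P_{i+1}\|\le C$: each finite product satisfies $\|\prod_{j=i+1}^n I_j\bW_j\|\le\prod_{j=i+1}^n\|\bW_j\|\le C$ by \eqref{matrixcon1}, \eqref{matrixcon2} and \eqref{limit2sufficient2}, and letting $n\to\infty$ preserves the bound by continuity of the norm. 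Combined with \eqref{limit2sufficient3} this shows that the series $S:=\sum_{i=1}^\infty P_{i+1}I_i\bbb_i$ converges absolutely, since $\sum_i\|P_{i+1}\|\,\|I_i\|\,\|\bbb_i\|\le C\sum_i\|\bbb_i\|<+\infty$. I claim this $S$ is the value of the limit \eqref{limit2}.

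To prove $S_n\to S$, I would fix a cutoff $p\in\bN$ and split, for $n>p$,
\[
S_n-S=\sum_{i=1}^p\bigl(\textstyle\prod_{j=i+1}^n I_j\bW_j-P_{i+1}\bigr)I_i\bbb_i+\sum_{i=p+1}^n\bigl(\textstyle\prod_{j=i+1}^n I_j\bW_j-P_{i+1}\bigr)I_i\bbb_i-\sum_{i=n+1}^\infty P_{i+1}I_i\bbb_i.
\]
The middle and last sums are controlled uniformly in $n$ by the tail of $\sum\|\bbb_i\|$: using $\|\prod_{j=i+1}^n I_j\bW_j\|\le C$ and $\|P_{i+1}\|\le C$, their norms are at most $2C\sum_{i>p}\|\bbb_i\|$ and $C\sum_{i>p}\|\bbb_i\|$, respectively. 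The first sum is a fixed finite sum, and for each fixed $i\le p$ the factor $\prod_{j=i+1}^n I_j\bW_j\to P_{i+1}$ as $n\to\infty$ by \eqref{limit2sufficient1}; hence this finite sum tends to $0$ as $n\to\infty$ with $p$ held fixed. Taking $\limsup_{n\to\infty}$ therefore yields $\limsup_n\|S_n-S\|\le 3C\sum_{i>p}\|\bbb_i\|$, and letting $p\to\infty$ and invoking \eqref{limit2sufficient3} forces $\|S_n-S\|\to0$, which is exactly the existence of the limit \eqref{limit2}.

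The step I expect to require the most care is not the finite head itself but understanding why a cruder estimate fails. One is tempted to write $\prod_{j=i+1}^n I_j\bW_j-P_{i+1}=\bigl(I-\prod_{j=n+1}^\infty I_j\bW_j\bigr)\prod_{j=i+1}^n I_j\bW_j$ and factor $I-\prod_{j=n+1}^\infty I_j\bW_j$ out of the entire sum; but under the present hypotheses $\prod_{j=n+1}^\infty I_j\bW_j$ need not converge to $I$ (the $\bW_n$ are not assumed to be of the form $I+\bP_n$ here, and the $I_j$ may be singular), so that factor does not vanish and the estimate stalls. The truncation argument circumvents this by exploiting termwise convergence of the products only on the finite head of length $p$, where finitely many factors $\prod_{j=i+1}^n I_j\bW_j$ appear and each converges by \eqref{limit2sufficient1}, while absorbing the infinite tail into the summable bound $C\sum_{i>p}\|\bbb_i\|$ that comes from \eqref{limit2sufficient2} and \eqref{limit2sufficient3}. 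Verifying the uniform-in-$n$ tail bounds is the routine but essential ingredient that lets the double limit in $n$ and $p$ be taken in the stated order.
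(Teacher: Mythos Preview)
Your argument is correct and follows essentially the same truncation strategy as the paper: split at a cutoff $p$, control the finite head via the convergence hypothesis \eqref{limit2sufficient1}, and control the tail via the uniform bound \eqref{limit2sufficient2} together with the summability \eqref{limit2sufficient3}. The only cosmetic difference is that the paper shows $\{\bc_n\}$ is Cauchy by estimating $\bc_{n'}-\bc_n$, whereas you first construct the explicit limit $S=\sum_{i=1}^\infty P_{i+1}I_i\bbb_i$ and then estimate $S_n-S$; your version has the small bonus of identifying the limit.
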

\begin{proof}
It suffices to show that
$$
\bc_n:=\sum_{i=1}^n\biggl(\prod_{j=i+1}^n I_j\bW_j\biggr)I_i\bbb_i, \ \ n\in\bN
$$
forms a Cauchy sequence in $\bR^m$. Let $\varepsilon>0$ be arbitrary. By condition (\ref{limit2sufficient3}), there exists some $p\in\bN$ such that
\begin{equation}\label{sumofbbbi}
    \sum_{i=p+1}^\infty \|\bbb_i\|<\varepsilon.
\end{equation}
According to hypothesis (\ref{limit2sufficient1}), when $n'>n$ are big enough, it holds for all $i=1,2, \dots, p$ that
\begin{equation}\label{convergenceCondition}
\biggl\|\prod_{j=i+1}^{n'}I_j\bW_j-\prod_{j=i+1}^{n}I_j\bW_j\biggr\|\le \varepsilon.
\end{equation}
For such $n'>n>p$, we estimate $\|\bc_{n'}-\bc_{n}\|$. To this end, we let
$$
\bd_{n',n,p}:=\sum_{i=1}^p\biggl(\prod_{j=i+1}^{n'}I_j\bW_j-\prod_{j=i+1}^{n}I_j\bW_j\biggr)I_i\bbb_i.
$$
Then, it follows from condition (\ref{convergenceCondition}) that for big enough $n'>n$,
\begin{equation}\label{dn'np}
\|\bd_{n',n,p}\|\le \sum_{i=1}^p\biggl\|\prod_{j=i+1}^{n'}I_j\bW_j-\prod_{j=i+1}^{n}I_j\bW_j\biggr\|\|\bbb_i\|\le \varepsilon\sum_{i=1}^p\|\bbb_i\|.
\end{equation}
Note that
\begin{equation}\label{differencecn-cn'}
\bc_{n'}-\bc_{n}=\bd_{n',n,p}+\sum_{i=p+1}^{n'}\biggl(\prod_{j=i+1}^{n'}I_j\bW_j\biggr)I_i\bbb_i+\sum_{i=p+1}^{n}\biggl(\prod_{j=i+1}^{n}I_j\bW_j\biggr)I_i\bbb_i.
\end{equation}
Employing \eqref{dn'np}, (\ref{matrixcon2}), (\ref{limit2sufficient2}), and \eqref{sumofbbbi}, we have for  big enough $n'>n$ that
$$
\begin{aligned}
\|\bc_{n'}-\bc_{n}\|
&\le \|\bd_{n',n,p}\|+\sum_{i=p+1}^{n'}\biggl(\prod_{j=i+1}^{n'}\|\bW_j\|\biggr)\|\bbb_i\|+\sum_{i=p+1}^{n}\biggl(\prod_{j=i+1}^{n}\|\bW_j\|\biggr)\|\bbb_i\|\\
&\le \varepsilon\sum_{i=1}^p\|\bbb_i\|+2C\sum_{i=p+1}^{\infty}\|\bbb_i\|\\
&\le \varepsilon\biggl(\sum_{i=1}^p\|\bbb_i\|+2C\biggr).
\end{aligned}
$$
This shows that $\bc_n$ is a Cauchy sequence and thus it converges.
\end{proof}

We remark that when $\bW_i,I_i$ all equal the identity matrix, limit (\ref{limit2}) becomes
$$
\lim_{n\to\infty}\sum_{i=1}^n\bbb_i.
$$
Thus, condition (\ref{limit2sufficient3}) is almost necessary for the existence of limit (\ref{limit2}). The other two conditions (\ref{limit2sufficient1}) and (\ref{limit2sufficient2}) are weaker than condition (\ref{sufficient2}), as explained in the proof of Theorem \ref{finalsufficient} to be presented in the next section.

\section{Sufficient Conditions for Convergence of ReLU Networks}
\setcounter{equation}{0}

In this section, we present sufficient conditions for deep ReLU networks to converge pointwise by using results established in the previous two sections. Moreover, we demonstrate that these sufficient conditions provide mathematical interpretation to the well-known deep residual networks which have achieved remarkable success in image classification.


We now establish sufficient conditions on their weight matrices and bias vectors for deep ReLU networks to converge pointwise.

\begin{theorem}\label{finalsufficient}
Let $\|\cdot\|$ be a norm on $\bR^m$ that satisfies (\ref{nondecreasingvectornorm})  and $\|\cdot\|$ be its induced matrix norm. If the weight matrices $\bW_n$, $n\ge 2$, satisfy
\begin{equation}\label{finalsufficientcon1}
   \bW_n=I+\bP_n, \ \  n\ge 2, \ \ \sum_{n=2}^\infty\|\bP_n\|<+\infty
\end{equation}
and the bias vectors $\bbb_i$, $i\in\bN$, satisfy
$$
\sum_{n=1}^\infty \|\bbb_n\|<+\infty,
$$
then the ReLU neural networks $\cN_n$ converge pointwise on $[0,1]^d$.
\end{theorem}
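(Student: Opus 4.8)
The plan is to invoke the sufficient condition for pointwise convergence furnished by the second claim of Theorem~\ref{convergenceRELU}, which reduces the problem to showing that, for every sequence of activation matrices $\bI=(I_n\in\cD_m:\ n\in\bN)$, the two limits (\ref{limit1}) and (\ref{limit2}) both exist. The hypothesis (\ref{finalsufficientcon1}) is precisely condition (\ref{sufficient2}), so Theorem~\ref{sufficientthm} immediately supplies the existence of the first limit (\ref{limit1}) for every choice of $I_n\in\cD_m$. It therefore remains to secure the existence of the second limit (\ref{limit2}), and for this I would verify the three hypotheses of Theorem~\ref{theoremonlimit2}.

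The summability assumption $\sum_{n=1}^\infty\|\bbb_n\|<+\infty$ is exactly condition (\ref{limit2sufficient3}). For condition (\ref{limit2sufficient1}), I observe that for each fixed $i\ge2$ the shifted sequence $\{\bW_j\}_{j\ge i}$ still satisfies $\bW_j=I+\bP_j$ with $\sum_{j\ge i}\|\bP_j\|<+\infty$; applying Theorem~\ref{sufficientthm} to this tail sequence then shows that $\prod_{j=i}^\infty I_j\bW_j$ converges for every $i\ge2$. For the uniform bound (\ref{limit2sufficient2}), I would use that the induced matrix norm satisfies $\|I\|=1$, together with the triangle inequality, to estimate $\|\bW_j\|=\|I+\bP_j\|\le1+\|\bP_j\|$, and then combine this with the elementary inequality $1+x\le e^x$ to obtain
\begin{equation*}
\prod_{j=i}^n\|\bW_j\|\le\prod_{j=i}^n\bigl(1+\|\bP_j\|\bigr)\le\exp\biggl(\sum_{j=i}^n\|\bP_j\|\biggr)\le\exp\biggl(\sum_{j=2}^\infty\|\bP_j\|\biggr)
\end{equation*}
for all $2\le i\le n<+\infty$; hence (\ref{limit2sufficient2}) holds with $C:=\exp\bigl(\sum_{j=2}^\infty\|\bP_j\|\bigr)$, which is finite by (\ref{finalsufficientcon1}). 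With all three hypotheses in place, Theorem~\ref{theoremonlimit2} yields the existence of the limit (\ref{limit2}) for every sequence $\bI$.

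Once both limits (\ref{limit1}) and (\ref{limit2}) are known to exist for every sequence of activation matrices, the second assertion of Theorem~\ref{convergenceRELU} delivers the pointwise convergence of $\{\cN_n\}$ on $[0,1]^d$, which completes the argument. The proof is largely a matter of splicing together the two convergence results of Section~4; the only genuinely new step, and hence the mild obstacle worth flagging, is the uniform product bound $\prod_{j=i}^n\|\bW_j\|\le C$. Its verification is where I would be most careful, since it crucially exploits both the submultiplicativity (\ref{matrixcon1}) and the normalization $\|I\|=1$ of the induced matrix norm, the latter being precisely the feature that separates such norms from, say, the Frobenius norm, for which $\|I\|=\sqrt m$ and the product estimate would fail.
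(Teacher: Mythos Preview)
Your proposal is correct and follows essentially the same route as the paper's own proof: reduce via the second part of Theorem~\ref{convergenceRELU}, obtain (\ref{limit1}) from Theorem~\ref{sufficientthm}, and obtain (\ref{limit2}) by checking the three hypotheses of Theorem~\ref{theoremonlimit2}, with the uniform product bound verified via $\|\bW_j\|\le1+\|\bP_j\|\le e^{\|\bP_j\|}$ and the same constant $C=\exp\bigl(\sum_{j\ge2}\|\bP_j\|\bigr)$. The only cosmetic difference is that for (\ref{limit2sufficient1}) the paper simply appeals to ``the proof of Theorem~\ref{sufficientthm}'' whereas you re-apply that theorem to the tail sequence; these are the same observation.
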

\begin{proof}
According to Theorem \ref{convergenceRELU}, it suffices to show under the given conditions of this theorem, limits  (\ref{limit1}) and  (\ref{limit2}) exist for all $I_n\in\cD_m$, $n\in\bN$.

Since the vector norm on $\bR^m$ satisfies (\ref{nondecreasingvectornorm}), its induced matrix norm satisfies conditions (\ref{matrixcon1}) and (\ref{matrixcon2}). By Theorem \ref{sufficientthm}, condition \eqref{finalsufficientcon1} ensures that limit (\ref{limit1}) exists for all $I_n\in\cD_m$, $n\in\bN$.

It remains to confirm that limit (\ref{limit2}) exists for all $I_n\in\cD_m$, $n\in\bN$. By the proof of Theorem \ref{sufficientthm}, condition (\ref{limit2sufficient1}) is satisfied when condition (\ref{finalsufficientcon1}) is fulfilled. We can also verify by using properties of the exponential function that
$$
\prod_{j=i}^n \|\bW_j\|\le\prod_{j=i}^n (1+\|\bP_j\|)\le \prod_{j=i}^n\exp(\|\bP_j\|)\le \exp\biggl(\sum_{j=2}^\infty\|\bP_j\|\biggr),\ \ 2\le i\le n<+\infty.
$$
Therefore, condition (\ref{limit2sufficient2}) is also satisfied with the constant
$$
C:=\exp\biggl(\sum_{j=2}^\infty\|\bP_j\|\biggr).
$$
By Theorem \ref{theoremonlimit2}, limit (\ref{limit2}) exists for all $I_n\in\cD_m$, $n\in\bN$.

Finally, by part 2 of Theorem \ref{convergenceRELU}, we conclude that the ReLU deep network $\cN_n$ converges pointwise on $[0,1]^d$ as $n$ tends to infinity.
\end{proof}

We remark that under the conditions in Theorem \ref{finalsufficient} or Proposition \ref{proplimit2}, it holds
$$
\lim_{n\to\infty} \bW_n x+\bbb_n=x,\ \ x\in\bR^m,
$$
which reveals that for deep layers, the linear function $\bW_nx+\bbb_n$ will be close to the identity mapping. Thus the deep weight layers of a ReLU network apply gradual changes to the ultimate input-output relation determined by the network. This justifies the design strategy of the successful deep Residual Networks (ResNets) for image recognition \cite{KaimingHe,KaimingHe2}.

Deep ResNets achieved an $3.57\%$ top-5 error on the
ImageNet test set, and won the 1st place in the ILSVRC
2015 classification competition. They also have excellent generalization performance
on other recognition tasks. These successes are due to the ability of ResNets to accommodate very deep layers (more than 1000 layers in some applications). This ability originates from the design strategy of ResNets.

A deep ResNet consists of many stacked residual units of the form:
$$
\bx_{l+1}=\sigma(\bx_l+\cF(\bx_l,\bW_l,\bbb_l))
$$
where $\bx_l$ and $\bx_{l+1}$ are input and output of the $l$-th unit, $\bW_l,\bbb_l$ are the weight matrices and bias vectors of the unit, and the residual function $\cF(\bx_l,\bW_l,\bbb_l)$ is a ReLU network with at least two layers. In this manner, instead of directly learning a desired underlying mapping by the stacked layers, the ResNet explicitly let these layers to fit the residual function. The design is motivated from the intuition that the residual functions will be close to zero, and thus $\bx_l+\cF(\bx_l,\bW_l,\bbb_l)$ will be close to the identity mapping for deep units. This motivation was supported by numerical experiments presented in \cite{KaimingHe}. Our results stated in Theorem \ref{finalsufficient} provide a theoretical support for the
motivation.

{\small
\bibliographystyle{amsplain}

}

\end{document}